\newlength{\sectionReduceTop}
\newlength{\sectionReduceBot}
\newlength{\subsectionReduceTop}
\newlength{\subsectionReduceBot}
\newlength{\abstractReduceTop}
\newlength{\abstractReduceBot}
\newlength{\captionReduceTop}
\newlength{\captionReduceBot}
\newlength{\subsubsectionReduceTop}
\newlength{\subsubsectionReduceBot}
\newlength{\eqnReduceTop}
\newlength{\eqnReduceBot}
\newlength{\horSkip}
\newlength{\verSkip}
\newlength{\figureHeight}
\definecolor{green}{rgb}{0,0.5,0}
\definecolor{blue}{rgb}{0,0,0.8}
\definecolor{cyan}{cmyk}{1,0,0,0}
\definecolor{magenta}{cmyk}{0,1,0,0}
\def\vec#1{\mathchoice%
	{\mbox{\boldmath $\displaystyle\bf#1$}}
	{\mbox{\boldmath $\textstyle\bf#1$}}
	{\mbox{\boldmath $\scriptstyle\bf#1$}}
	{\mbox{\boldmath $\scriptscriptstyle\bf#1$}}}
\def\v#1{\protect\vec #1}
\newcommand{\inter}{\cap}
\newcommand{\union}{\cup}
\newcommand{\Cut}{\mathrm{Cut}}
\newcommand{\Es}{\mathcal{E}}
\renewcommand{\S}{S}
\newcommand{\F}{F}
\newcommand{\f}{f}
\newcommand{\Rel}{R}
\newcommand{\rel}{r}
\newcommand{\Div}{D}
\renewcommand{\div}{d}
\newcommand{\z}{\v z}
\newcommand{\G}{G}
\newcommand{\xb}{\mathbf{x}}
\newcommand{\yb}{\mathbf{y}}
\newcommand{\wb}{\mathbf{w}} 
\newcommand{\oracle}{oracle\xspace}
\newcommand{\train}{\texttt{train}\xspace}
\newcommand{\val}{\texttt{val}\xspace}
\newcommand{\test}{\texttt{test}\xspace}
\newtheorem{lemma}{Lemma}
\newcommand{\fmin}{F_{\min{}}}
\newcommand{\Shat}{\widehat{S}}
\DeclareMathOperator{\ham}{Ham}
\title{Submodular meets Structured: Finding Diverse Subsets in Exponentially-Large Structured Item Sets}
\author{
Adarsh Prasad\\
UT Austin \\
\texttt{adarsh@cs.utexas.edu} \\
\And
Stefanie Jegelka \\
 UC Berkeley\\
\texttt{stefje@eecs.berkeley.edu} \\
\And
Dhruv Batra \\
Virginia Tech \\
\texttt{dbatra@vt.edu}
}
\begin{document} 
\maketitle


\vspace{\abstractReduceTop}

\vspace{\abstractReduceTop}
\begin{abstract} 
\vspace{\abstractReduceBot}
To cope with the high level of ambiguity faced in domains such as Computer Vision or Natural Language processing, 
robust prediction methods often search for a \emph{diverse set} of high-quality candidate solutions or \emph{proposals}. 
%
In structured prediction problems, this becomes a daunting task, as the solution space
(image labelings, sentence parses, \etc) is exponentially large. 
We study greedy algorithms for finding a diverse subset of solutions in structured-output spaces  
by drawing new connections between submodular functions \emph{over combinatorial item sets} 
and High-Order Potentials (HOPs) studied for graphical models. 
Specifically, we show via examples that when 
marginal gains of submodular diversity functions allow structured representations, this enables 
efficient (sub-linear time) approximate maximization by reducing the greedy augmentation step 
to inference in a factor graph with appropriately constructed HOPs.
We discuss benefits, trade-offs, 
and show that our constructions lead to 
significantly better proposals. 
\end{abstract} 
\vspace{\abstractReduceBot}


\vspace{\sectionReduceTop}
\section{Introduction}
\vspace{\sectionReduceBot}

Many problems in Computer Vision, Natural Language Processing and Computational Biology
involve 
mappings from an input space $\mathcal{X}$ to an exponentially large space $\calY$ of \emph{structured outputs}.  
For instance, $\calY$ may be the space of all segmentations of an image with $n$ pixels, each of which may take $L$ 
labels, so $|\calY| = L^n$.
Formulations such as Conditional Random Fields (CRFs)~\cite{lafferty_icml01_crf}, 
Max-Margin Markov Networks (M$^3$N)~\cite{taskar_nips03}, 
and Structured Support Vector Machines (SSVMs)~\cite{tsochantaridis_jmlr05}
have successfully provided principled ways of scoring all solutions $\yb \in \calY$ and 
predicting the \emph{single} highest scoring 
or maximum \emph{a posteriori} (MAP) configuration, by exploiting 
the factorization of a structured output into its constituent ``parts''.

In a number of scenarios, the posterior $\pr(\yb | \xb)$ has several modes due to ambiguities, 
and we seek not only a single best prediction but a 
\emph{set} of good predictions:\\ 
\begin{inparaenum}[(1)]
\item \textbf{Interactive Machine Learning.} Systems like Google Translate (for machine translation) 
or Photoshop (for interactive image segmentation) solve structured prediction problems that are often 
ambiguous ("what did the user really mean?"). 
Generating a small set of relevant candidate solutions for the user to select from can greatly improve the results.
\\
\item \textbf{M-Best hypotheses in cascades.} Machine learning algorithms are often cascaded, with the output of one model 
being fed into another~\citep{viola_ijcv04}.
Hence, at the initial stages it is not necessary to make 
a single perfect prediction. We rather seek a set of \emph{plausible} predictions that are subsequently 
re-ranked, combined or processed by a more sophisticated mechanism. \\
\end{inparaenum} 
In both scenarios, we ideally want a small set of $M$ \emph{plausible} (\ie, high scoring) but \emph{non-redundant} 
(\ie, diverse) structured-outputs to hedge our bets. 

\textbf{Submodular Maximization and Diversity.} 
The task of searching for a diverse high-quality subset of items from a ground set $V$ 
has been well-studied in information retrieval~\cite{carbonell_sigir98}, 
sensor placement~\cite{krause_jmlr08}, 
document summarization~\cite{lin_acl11}, viral marketing~\cite{kempe_kdd03},
and robotics~\cite{dey_rss12}. 
Across these domains, \emph{submodularity} 
has emerged as an a fundamental and 
practical concept -- 
a property of functions for measuring diversity of a subset of items. 
Specifically, a set function $F: 2^V \to \mathbb{R}$ is submodular if its \emph{marginal gains},  
$\F(a | \S) \equiv \F(\S \cup a) - \F(\S)$ are decreasing, \ie $\F(a | \S) \ge \F(a | T)$ for all 
$\S \subseteq T$ and $a \notin T$.
In addition, if $\F$ is \emph{monotone}, \ie, $\F(\S) \leq \F(T),\,\, \forall \S \subseteq T$, 
then a simple greedy algorithm 
(that in each iteration $t$ 
adds to the current set $\S^t$ 
the item with the largest marginal gain $\F(a | \S^t)$)   
achieves an
approximation factor 
of $(1-\frac{1}{e})$ \cite{nemhauser_mp78}. 
This result has had significant practical impact \cite{tutorial}. Unfortunately, if the number of items $|V|$ 
is exponentially large, then \emph{even a single linear scan for greedy augmentation is infeasible}.

In this work, we study conditions under which it is feasible to greedily maximize a submodular function 
over an exponentially large ground set $V = \{v_1, \ldots, v_N\}$ whose 
elements are \emph{combinatorial objects}, i.e., 
labelings of a \emph{base set} of $n$ variables $\yb = \{y_1, y_2, \ldots, y_n\}$.  
For instance, in image segmentation, the base variables $y_i$ are pixel labels, 
and each item $a \in V$ is a particular labeling of the pixels. 
Or, if each base variable $y_e$ indicates the presence or absence of an 
edge $e$ in a graph, then each item may represent a spanning tree or a maximal matching.  
Our goal is to find a set of $M$ plausible and diverse configurations \emph{efficiently}, 
\ie in time sub-linear in $|V|$ (ideally scaling as a low-order polynomial in $\log |V|$). 
We will assume $F(\cdot)$ to be monotone submodular, nonnegative and normalized ($F(\emptyset)=0$), and
base our study on the greedy algorithm.
As a running example, we focus on pixel labeling, where each 
base variable takes values in a set $[L] = \{1,\ldots, L\}$ of labels.

\begin{figure}[t]
\centering
\subfigure[Image]{
\frame{\includegraphics[height=0.155\linewidth]{./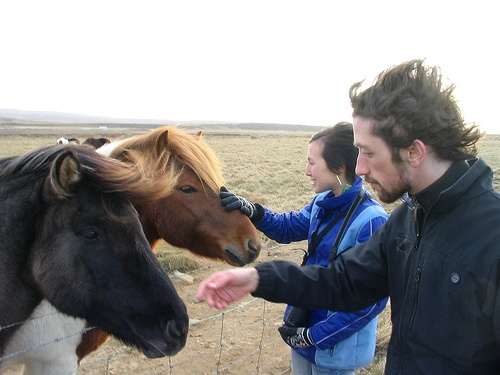}}
} \hspace{-5pt} 
\subfigure[All segmentations: $|V| = L^n$]
{\includegraphics[height=0.175\linewidth]{./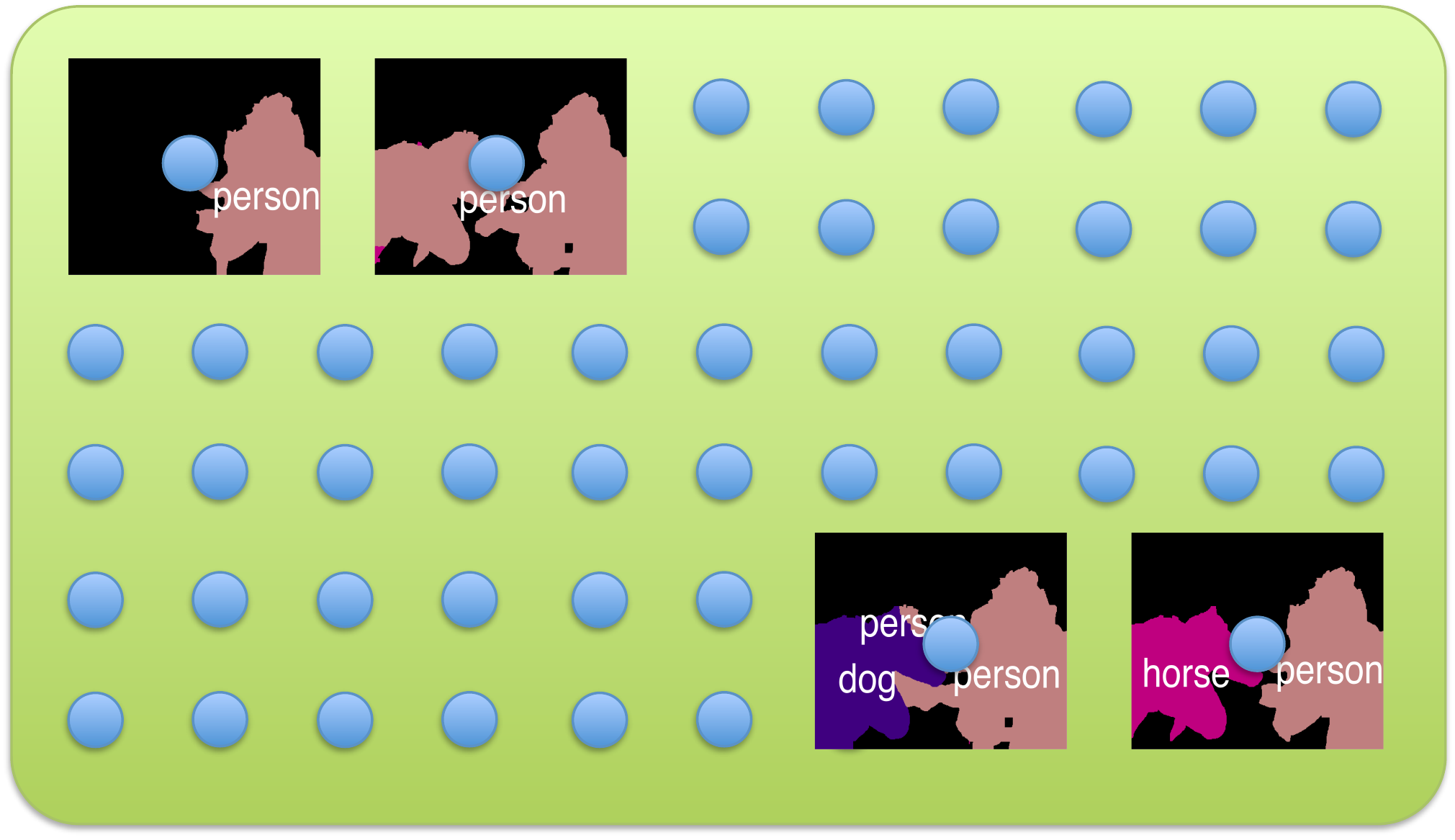}} \hspace{-2pt}
\subfigure[\small Structured Representation.]{
\includegraphics[height=0.19\linewidth]{./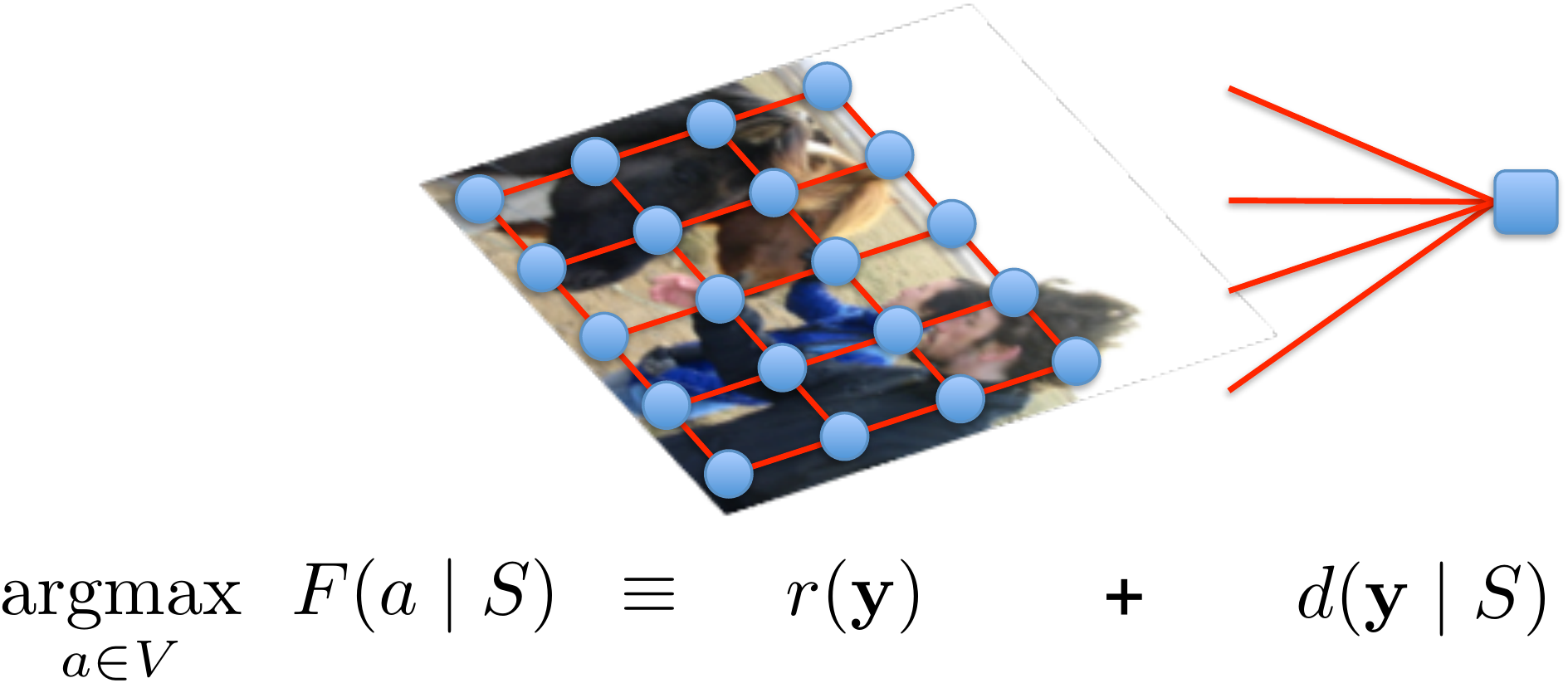}
} 
\caption{
(a) input image; 
(b) space of all possible object segmentations / labelings (each item is a segmentation); 
(c) we convert the problem of finding 
the item with the highest marginal gain $\F(a | S)$ to a MAP inference problem 
in a factor graph over base variables $\yb$ with an appropriately defined HOP.}
\label{fig:teaser}
\vspace{-12pt}
\end{figure}

\textbf{Contributions.} 
Our principal contribution is a conceptual one. 
We observe that 
\emph{marginal gains of a number of submodular functions allow structured representations}, 
and this enables efficient greedy maximization over exponentially large ground sets
-- by reducing the greedy augmentation step to a MAP inference 
query in a discrete factor graph augmented with a suitably constructed 
\emph{High-Order Potential} (HOP). 
Thus, our work draws new connections between two seemingly disparate but highly related 
areas in machine learning -- submodular maximization and inference in graphical models with structured HOPs. 
As specific examples, we construct submodular functions for three different, task-dependent 
definitions of diversity, 
and provide reductions to three different HOPs 
for which efficient inference techniques have already been developed. 
Moreover, we present a generic recipe for constructing such 
submodular functions, which may be ``plugged'' with efficient HOPs discovered in future work. 
Our empirical contribution is an efficient algorithm for producing a set of 
image segmentations with significantly higher \emph{oracle accuracy}\footnote{The accuracy 
of the most accurate segmentation in the set.} than previous works. The algorithm is general enough to 
transfer to other applications.
\figref{fig:teaser} shows an overview of our approach. 

\textbf{Related work: generating multiple solutions.}
Determinental Point Processes
 are an elegant probabilistic 
model over sets of items with a preference for diversity. 
Its generalization to a structured setting 
\cite{kulesza_nips10} assumes 
a tree-structured model, an assumption that we do not make.
%
%
Guzman-Rivera \etal \cite{rivera_nips12, rivera_aistats14} learn a set of $M$ models, 
each producing one solution, to form the set of solutions. 
Their approach requires access to the learning sub-routine and repeated re-training of the models, 
which is not always possible, as it may be expensive or proprietary. 
We assume to be given a single (pre-trained) model from which we must generate multiple diverse, good solutions.
Perhaps the closest to our setting 
are recent techniques 
for finding diverse $M$-best solutions~\cite{batra_eccv12, park_iccv11} 
or modes~\cite{chen_aistats13, chen_nips14} in graphical models. 
While \cite{chen_aistats13} and \cite{chen_nips14} are inapplicable since they are 
restricted to chain and tree graphs, 
we compare to 
other baselines in \secref{sec:hammingball} and \ref{sec:exp}. 


\vspace{\sectionReduceTop}
\subsection{Preliminaries and Notation} 
\label{sec:setup_basics}
\vspace{\sectionReduceBot}

We select from a ground set $V$ of $N$ items. Each item is a labeling $\yb = \{y_1,y_2, \ldots, y_n\}$ of $n$ base variables. 
For clarity, we use non-bold letters $a \in V$ for items, and boldface letters $\yb$ for base set configurations. 
Uppercase letters refer to functions over the ground set items $\F(a | A), \Rel(a | A), \Div(a | A)$, 
and lowercase letters to functions over base variables $\f(\yb)$, $\rel(\yb), \div(\yb)$. 
Formally, there is
a bijection $\phi: V \mapsto [L]^m$ 
that maps items $a \in V$ to their representation as base variable labelings $\yb = \phi(a)$. 
For notational simplicity, 
we often use $\yb \in \S$ to mean 
$\phi^{-1}(\yb) \in \S$, \ie the item corresponding to the labeling $\yb$ is present in the set $\S\subseteq V$. 
We write $\ell \in \yb$ if the label $\ell$ is used in $\yb$, \ie $\exists j$ s.t. $y_j = \ell$. 
For a set $c \subseteq [n]$, we use $y_c$ to denote the tuple $\{y_i \mid i \in c\}$. 
Our goal to find an ordered set or \emph{list} of items $\S \subseteq V$ that maximizes a scoring function $F$.
Lists generalize the notation of sets, and allow for reasoning of item order and repetitions. 
More details about list vs set prediction can be found in 
\cite{streeter_nips08, dey_rss12}.

\textbf{Scoring Function.}
We trade off the relevance and diversity of list $\S \subseteq V$ via a scoring function $F:2^V \to \mathbb{R}$ of the form
\begin{equation}
  \label{eq:1}
  \F(\S) = \Rel(\S) + \lambda \Div(\S),
\end{equation}
where $\Rel(\S) = \sum_{a \in \S}\Rel(a)$ is a modular nonnegative relevance function 
that aggregates the quality of all items in the list; 
$\Div(\S)$ is a monotone normalized submodular function that measure the diversity of items in $S$; 
and $\lambda \ge 0$ is a trade-off parameter. Similar objective functions were used e.g.\ in \cite{lin_acl11}. 
They are reminiscent of the general paradigm 
in machine learning 
of combining a loss 
function 
that measures quality (\eg training error) and 
a regularization term that encourages
 desirable properties (\eg smoothness, sparsity, or ``diversity''). 

\textbf{Submodular Maximization.} We aim to find a list $S$ that maximizes
%
$
  \F(\S)
$ subject to a cardinality constraint $|\S| \leq M$.
For monotone submodular $F$, this may be done via a greedy algorithm
that starts out with 
$\S^0 = \emptyset$, and iteratively adds the next best item:
\begin{equation}
  \label{eq:3}
  \S^{t} = \S^{t-1} \union a^{t}, \qquad a^{t} \in \argmax\nolimits_{a \in V} F(a \mid \S^{t-1}).
\end{equation}
The final solution $\S^M$ is within a factor of $(1-\tfrac{1}{e})$ of the optimal solution $\S^*$: 
$F(\S^M) \geq (1-\frac{1}{e})F(\S^*)$ \cite{nemhauser_mp78}. 
The computational bottleneck is that in each iteration, we must find the item with the largest marginal gain. 
Clearly, if $|V|$ has exponential size, we cannot touch each item even once. 
Instead, we propose ``augmentation sub-routines'' that exploit the structure of $V$ and 
maximize the marginal gain by solving an optimization problem over the base variables.

\vspace{\sectionReduceTop}
\section{Marginal Gains in Configuration Space}
\vspace{\sectionReduceBot}
To solve the greedy augmentation step via optimization over $\yb$, we transfer the
marginal gain from the world of items to the world of base variables and derive functions on $\yb$ from $F$: 
\begin{align}
\label{eq:definition_g}
\underbrace{F(\phi^{-1}(\yb) \mid \S)}_{f(\yb \mid \S)} = \underbrace{\Rel(\phi^{-1}(\yb))}_{\rel(\yb)} 
+ \lambda \underbrace{\Div(\phi^{-1}(\yb) \mid \S)}_{\div(\yb \mid \S)}.\\[-16pt]\nonumber
\end{align}
Maximizing $F(a|S)$ now means maximizing $\f(\yb|S)$ for $\yb = \phi(a)$. This 
can be a hard combinatorial optimization problem in general. However, as we will see, there is a 
broad class of useful functions $F$ for which $\f$ inherits exploitable structure, and  
$\argmax_{\yb} \f(\yb|S)$ can be solved efficiently, exactly or at least approximately.

\textbf{Relevance Function.} 
We use a structured relevance function $\Rel(a)$ that is the score of a factor graph 
defined over the base variables $\yb$. 
Let $G = (\calV, \calE)$ be a graph defined 
over $\{y_1,y_2,\ldots,y_n\}$, \ie $\calV= [n]$, $\calE \subseteq \binom{\calV}{2}$. 
Let $\bm{\calC} = \{C \mid C \subseteq \calV\}$ be a set of cliques in the graph, and 
 let $\theta_C: [L]^{|C|} \mapsto \mathbb{R}$ be the log-potential functions (or factors) for these cliques. The quality of an item $a = \phi^{-1}(\yb)$ is then given by
$\Rel(a) = \rel(\yb) = \sum_{C \in \bm{\calC}} \theta_C(y_C)$. 
For instance, with only node and edge factors, this quality becomes 
$\rel(\yb) = \sum_{p \in \cal V} \theta_p(y_p) + \sum_{(p,q) \in \cal E} \theta_{pq}(y_p,y_q).$
In this model, finding the \emph{single} highest quality item 
corresponds to maximum a posteriori (MAP) inference in the factor graph.

Although we refer to terms with probabilistic interpretations such as ``MAP'', 
we treat our relevance function as output of an \emph{energy-based model}~\cite{lecun_tutorial06} 
such as a Structured SVM~\cite{tsochantaridis_jmlr05}. For instance, 
$\rel(\yb) = \sum_{C \in \bm{\calC}} \theta_C(y_C) = \wb\trn \psi(\yb)$ for parameters $\wb$ and feature vector $\psi(\yb)$.
Moreover, we assume that the relevance function $\rel(\yb)$ is nonnegative%
\footnote{Strictly speaking, 
this condition is sufficient but not necessary. We only need nonnegative marginal gains.}. This assumption ensures that 
$F(\cdot)$ is monotone. If $F$ is non-monotone, algorithms other than the greedy are needed~\cite{buchbinder12,feige_focs07}.
%
We leave this generalization for future work.
%
In most application domains the relevance function is learned from data and thus 
our positivity assumption is not restrictive -- \emph{one can simply learn a positive relevance function}. 
For instance, in SSVMs, the relevance weights are learnt to maximize the margin between 
the correct labeling and all incorrect ones. 
We show in the supplement that 
SSVM parameters that assign nonnegative scores to all labelings achieve exactly the same hinge loss 
(and thus the same generalization error) as without the nonnegativity constraint.



\vspace{\sectionReduceTop}
\section{Structured Diversity Functions}
\label{sec:div}
\vspace{\sectionReduceBot}

\begin{figure}
\centering
\begin{minipage}[c]{.69\linewidth}
\subfigure[Label Groups]{
\includegraphics[width=0.5\linewidth]{./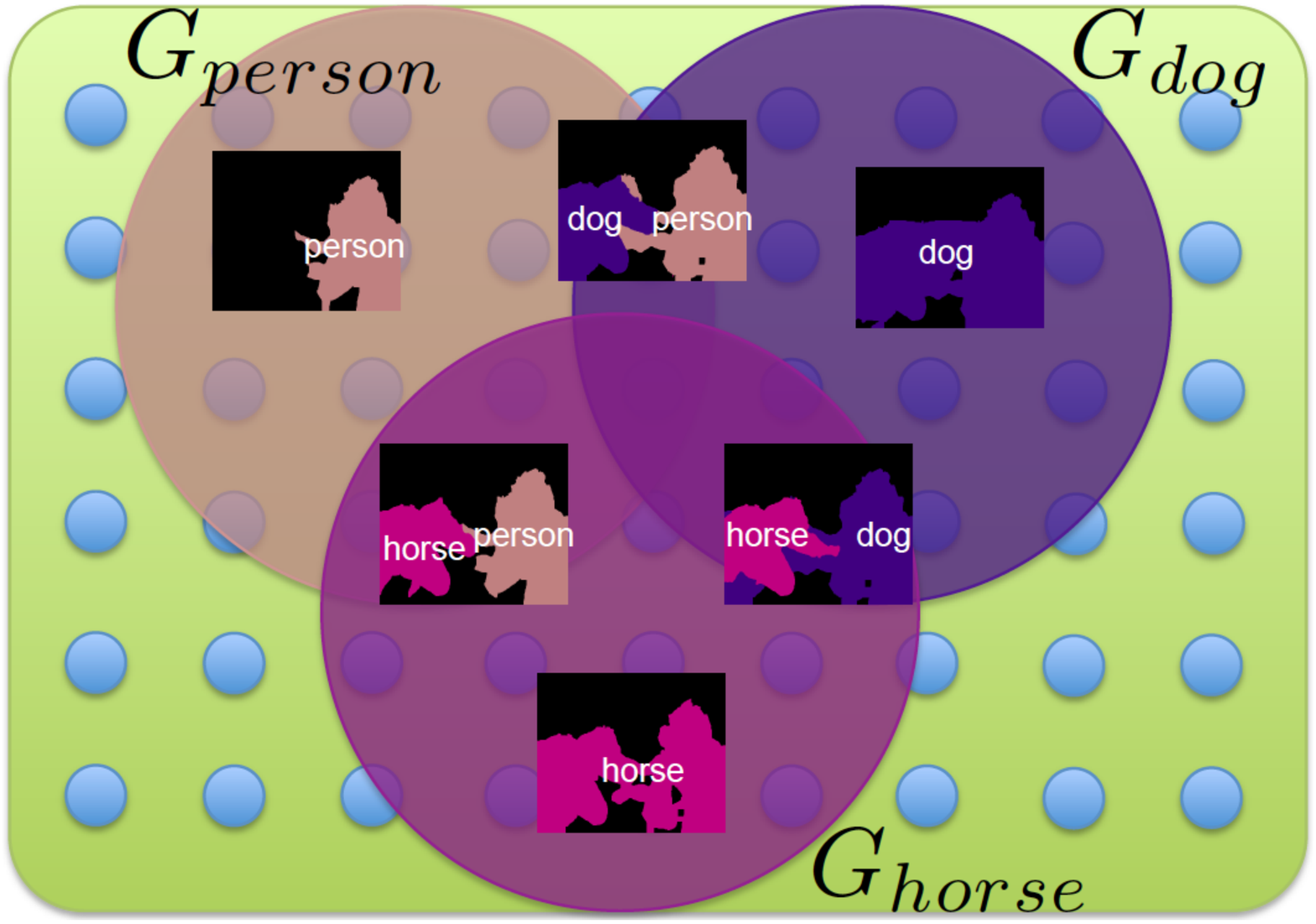}} \hspace{-5pt}
\subfigure[Hamming Ball Groups]{
\includegraphics[width=0.5\linewidth]{./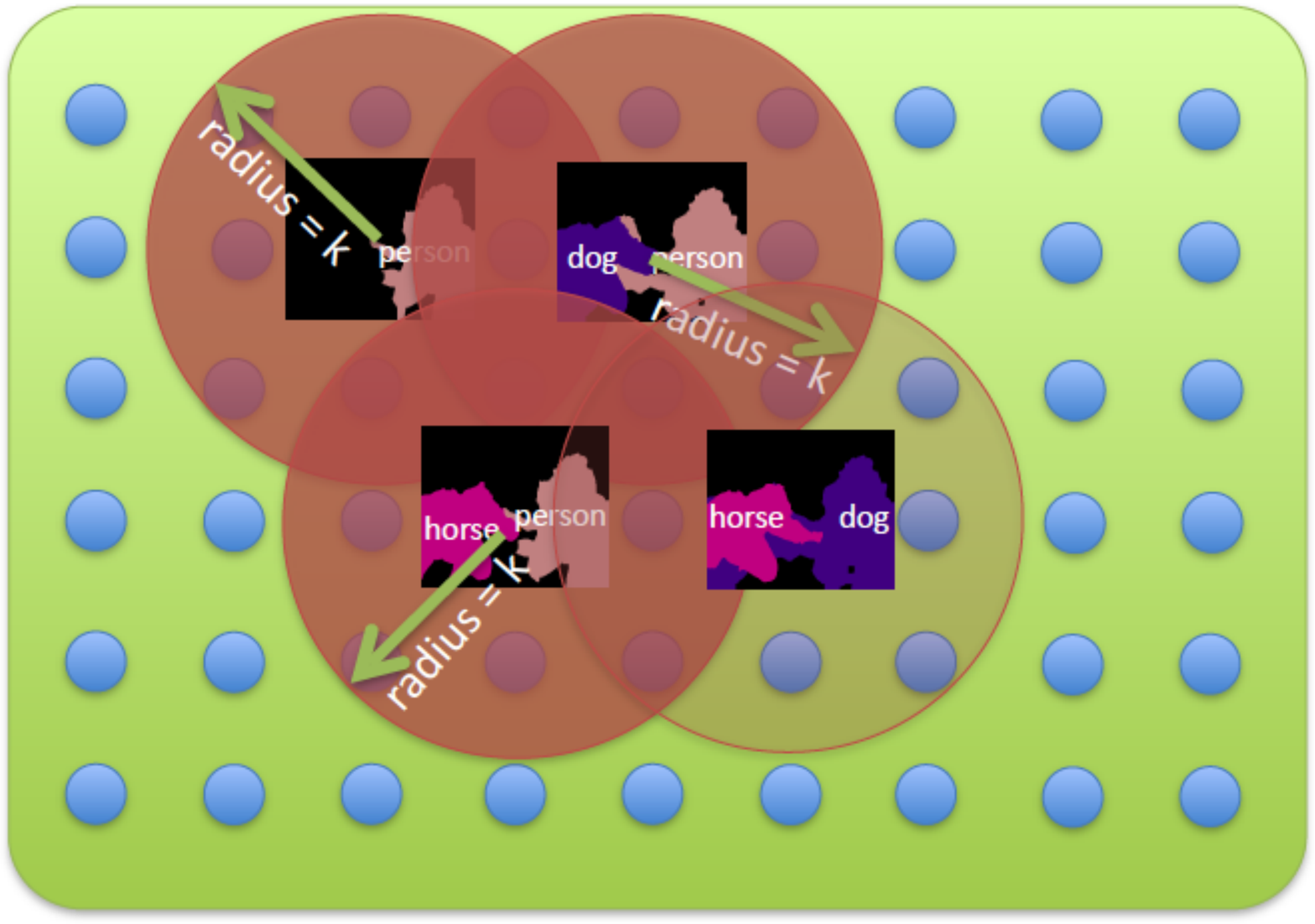}
}
\end{minipage} \hspace{5pt}
\begin{minipage}[c]{.28\linewidth}
\caption{Diversity via groups: (a) groups defined by the presence of labels (\ie \#groups = $L$); 
(b) groups defined by Hamming balls around each item/labeling (\ie \#groups = $L^n$). In each case, 
diversity is measured by how many groups are covered by a new item. See text for details. 
}
\label{fig:groups}
\end{minipage}
\vspace{-15pt}
\end{figure}

We next discuss a general recipe for constructing monotone submodular diversity functions $\Div(\S)$, 
and for reducing their marginal gains to structured representations over the base variables $\div(\yb | \S)$. 
%
Our scheme relies on constructing \emph{groups} $G_i$ that cover the ground set, \ie 
$V = \bigcup_i \G_i$. 
These groups will be defined by task-dependent characteristics -- 
for instance, in image segmentation, $\G_\ell$ can be the set of all segmentations that contain label $\ell$. 
The groups can be overlapping. 
For instance, if a segmentation $\yb$ contains pixels labeled ``grass'' and ``cow'', then $\yb \in \G_{\text{grass}}$ and 
$\yb \in \G_{\text{cow}}$. 

\textbf{Group Coverage: Count Diversity.}
Given $V$ and a set of groups $\{G_i\}$, we measure the diversity of a list $\S$
in terms of its \emph{group coverage}, \ie, the number of 
groups covered jointly by items in $\S$:
\begin{equation}
  \label{eq:threshdiv1}
  \Div(\S) = \Big| \big\{ i \mid G_i \inter \S \neq \emptyset \big\}\Big|,
\end{equation}
where we define $G_i \inter \S$ as the intersection of $G_i$ with the set of unique items in $\S$.
It is easy to show that this function is monotone submodular. 
If $\G_\ell$ is the group of all segmentations that contain label $\ell$, then 
the diversity measure of a list of segmentations $\S$ is the number of object labels that appear in any 
$a \in \S$. 
The marginal gain is the number of new groups covered by $a$: 
%
%
%
\begin{align}
  \label{eq:threshmarggain1}
  \Div(a\mid \S) =\ &\Big| \big\{i \mid a \in G_i \text{ and } \S \inter G_i = \emptyset \big\} \Big|.
\end{align}
Thus, the greedy algorithm will try to find an item/segmentation that belongs to 
as many as yet unused groups as possible. 

\textbf{Group Coverage: General Diversity.}
More generally, instead of simply counting the number of groups covered by $\S$, we 
can use a more refined decay
\begin{equation}
  \label{eq:gendiv1}
  \Div(\S) = \sum\nolimits_i h\big( \big|G_i \inter \S \big| \big).
\end{equation}
%
%
where $h$ is any nonnegative nondecreasing concave scalar function. 
This is a sum of submodular functions and hence submodular.
Eqn.~\eqref{eq:threshdiv1} is a special case of Eqn.~\eqref{eq:gendiv1} 
with 
$h(y) = \min\{1,y\}$. Other possibilities are $\sqrt{\cdot}$, or $\log(1 + \cdot)$.
For this general definition of diversity, the marginal gain is 
%
%
\begin{align}
  \label{eq:gendivmarggain}
  \Div(a \mid \S)  = & \sum\nolimits_{i: G_i \ni a }  \bigg[ h\big( 1 + \big|G_i \inter \S \big| \big) - h\big(\big|G_i \inter \S \big|\big) \bigg].
\end{align}
Since $h$ is concave, the gain 
$h\big( 1 + \big|G_i \inter \S \big| \big) - h\big(\big|G_i \inter \S \big|\big)$
decreases as $\S$ becomes larger. 
Thus, the marginal gain of an item $a$ is proportional to how rare each group $G_i \ni a$ 
is in the list $\S$. 

In each step of the greedy algorithm, we maximize 
$\rel(\yb) + \lambda \div(\yb | \S)$. 
We already established a structured representation of $\rel(\yb)$ via a factor graph on $\yb$. 
In the next few subsections, we specify three example definitions of groups $G_i$ that 
instantiate three diversity functions $\Div(\S)$. For each $\Div(\S)$, 
 we show how the
marginal gains $\Div(a | \S)$ can be expressed as a specific High-Order Potential (HOP) 
$\div(\yb | \S)$ in the factor graph over $\yb$. 
These HOPs are known to be efficiently optimizable, and hence we can solve the augmentation step efficiently.
Table \ref{table:overview} summarizes these connections. 

\textbf{Diversity and Parsimony.} 
If the groups $G_i$ are overlapping, 
some $\yb$ can belong to many groups simultaneously. While such a $\yb$ may offer an immediate large gain in diversity, in many applications it is more natural to seek a small list of \emph{complementary} labelings rather than having all labels occur in the same $\yb$.
For instance, in image segmentation with groups defined by label presence  
(Sec.~\ref{sec:labelcost}), natural scenes are unlikely to contain many labels at the same time. 
Instead, the labels should be spread \emph{across} the selected labelings $\yb \in S$.  
Hence, we include a \emph{parsimony factor} $p(\yb)$ 
that biases towards simpler labelings $\yb$. 
This term is 
a modular function and does not affect the diversity functions directly. 
We next outline some example instantiations of the functions~\eqref{eq:threshdiv1} and \eqref{eq:gendiv1}.

\vspace{\subsectionReduceTop}
\subsection{Diversity of Labels}
\label{sec:labelcost}
\vspace{\subsectionReduceBot}

For the first example, let
$G_\ell$ be the set of all 
labelings 
$\yb$ containing the label $\ell$, \ie 
$\yb \in G_\ell$ if and only if $y_j=\ell$ for some $j \in [n]$.
Such a diversity function arises in multi-class image segmentation -- if the highest scoring 
segmentation contains ``sky'' and ``grass'', then we would like to add complementary segmentations that contain 
an unused class label, say ``sheep'' or ``cow''. 

\textbf{Structured Representation of Marginal Gains.}
The marginal gain for this diversity function turns out to be a HOP called \emph{label cost}~\cite{delong_cvpr10}. It penalizes each label that occurs in a previous segmentation. 
%
Let $\texttt{lcount}_\S(\ell)$ be the number of segmentations in $\S$ that contain label $\ell$. In the simplest case of 
coverage diversity \eqref{eq:threshdiv1}, the marginal gain provides a constant reward for every as yet unseen label $\ell$: 
%
\begin{align}
\div(\yb \mid \S) = \Big| \big\{\ell \mid \yb \in G_\ell, \S \inter G_\ell = \emptyset \big\} \Big| 
= \sum_{\ell \in \yb,  \texttt{lcount}_\S(\ell) = 0} 1. 
\end{align}
%
For the general group coverage diversity \eqref{eq:gendiv1}, the gain becomes
\begin{small}
\begin{align}
\nonumber
\div(\yb | \S)  = \sum_{\ell: G_\ell \ni \yb} \bigg[  h\big( 1 + \big|G_\ell \inter \S \big| \big) - h\big(\big|G_\ell \inter \S \big|\big) \bigg]
 \label{eq:lcost_final}
&=  \sum_{\ell \in \yb} \bigg[ h\big( 1 +  \texttt{lcount}_\S(\ell) \big) - h\big( \texttt{lcount}_\S(\ell) \big) \bigg].  
\end{align}
\label{eq:lcost}
\end{small}
%
Thus, $d(\yb | S)$ rewards the presence of a label $\ell$ in $\yb$ by an amount proportional to 
how rare $\ell$ is in the segmentations already chosen in $\S$.
The parsimony factor in this setting is $p(\yb) = \sum_{\ell \in \yb} c(\ell)$. In the simplest case, $c(\ell) = -1$, \ie we are charged a 
constant for every label used in $\yb$. 

With this type of diversity (and parsimony terms), the greedy augmentation step is equivalent to performing MAP inference 
in a factor graph augmented with label reward HOPs: 
$\argmax_{\yb} \rel(\yb) + \lambda (\div(\yb \mid S) + p(\yb))$.  
\citet{delong_cvpr10} show how to perform approximate MAP inference with such label costs  
via an extension to the standard $\alpha$-expansion~\cite{Boykov2001} algorithm.



\textbf{Label Transitions.}
Label Diversity can be 
extended to reward not just the presence 
of previously unseen labels, but also the presence of previously unseen \emph{label transitions} 
(\eg, a person in front of a car or a person in front of a house). 
Formally, we define one group $\G_{\ell,\ell'}$ per label pair $\ell,\ell'$, and
$\yb \in \G_{\ell,\ell'}$ if it 
contains two adjacent variables $y_i,y_j$ with labels $y_i = \ell, y_j = \ell'$. 
This diversity function rewards the presence of a label pair $(\ell,\ell')$ by an 
amount proportional to how rare this pair is in the segmentations that are part of $\S$.
For such functions, the marginal gain $\div(\yb | \S)$ becomes a HOP 
called \emph{cooperative cuts}~\cite{jegelka_cvpr11}. 
%
%
The inference algorithm in \cite{kohli_cvpr13} gives 
a 
fully polynomial-time approximation scheme 
for any nondecreasing, nonnegative $h$, and the exact gain maximizer for the count function $h(y) = \min\{1,y\}$. Further details may be found in the supplement.

\begin{table}[t]
\centering
\footnotesize
\begin{tabular}{|lll|}
\hline
& Groups ($G_i$) & Higher Order Potentials  \\ 
\hline
Section \ref{sec:labelcost} & Labels & Label Cost  \\ 
Supplement & Label Transitions & Co-operative Cuts\\  
Section \ref{sec:hammingball} & Hamming Balls & Cardinality Potentials \\ 
\hline
\end{tabular}
\caption{\small Different diversity functions and corresponding HOPs.}
\label{table:overview}
\vspace{-20pt}
\end{table}
\vspace{\subsectionReduceTop}
\subsection{Diversity via Hamming Balls}
\label{sec:hammingball}
\vspace{\subsectionReduceBot}

The label diversity function simply rewarded the presence of a label $\ell$, irrespective of 
which or how many variables $y_i$ were assigned that label. The next diversity function 
 rewards a large \emph{Hamming distance} $\ham(\yb^1, \yb^2) = \sum_{i = 1}^{n} \ind{y_i^1 \ne y_i^2}$ 
 between configurations (where $\ind{\cdot}$ is the Iverson bracket.) 
Let $\calB_k(\yb)$ denote the k-radius Hamming ball centered at $\yb$, \ie 
$\calB(\yb) = \{\yb' \mid \ham(\yb',\yb) \le k \}$. 
The previous section constructed one group per label $\ell$. 
Now, we construct one group $G_{\yb}$ for each configuration $\yb$, 
which is the k-radius Hamming ball centered at $\yb$, \ie $G_{\yb} = \calB_k(\yb)$.   

\textbf{Structured Representation of Marginal Gains.}
For this diversity, 
the marginal gain $\div(\yb | \S)$ becomes a HOP called 
\emph{cardinality potential}~\cite{tarlow_aistats10}. 
For count group coverage, this becomes
%
\begin{subequations}
\vspace{-0pt}
\begin{align}
\div(\yb | \S) 
&= \Big| \big\{ {\yb'} \mid G_{\yb'} \inter (\S \cup \yb) \ne \emptyset \big\} \Big| - \Big| \big\{ {\yb'} \mid G_{\yb'} \inter \S \ne \emptyset \big\} \Big|  \\
\label{eq:intersection}
& = \Big| \bigcup_{\yb' \in \S \cup \yb} \calB_k(\yb') \Big| - \Big| \bigcup_{\yb' \in \S} \calB_k(\yb') \Big| 
 = \Big| \calB_k(\yb) \Big| - \bigg| \calB_k(\yb) \inter \Big[ \bigcup_{\yb' \in \S} \calB_k(\yb') \Big] \bigg|, 
\end{align}
\end{subequations}
%
\ie, the marginal gain of adding $\yb$ is the number of new configurations $\yb'$ 
covered by the Hamming ball centered at $\yb$. 
Since the size of the
intersection of $\calB_k(\yb)$ with a union of Hamming balls does not have a straightforward
structured representation, we maximize a lower bound on $d(\yb | S)$ instead: 
%
\begin{align}
\div(\yb \mid \S) \;\ge\; \div_{lb}(\yb \mid \S)\; \equiv\; \big|\calB_k(\yb) \big| - \sum\nolimits_{\yb' \in \S} \big| \calB_k(\yb) \inter \calB_k(\yb') \big| 
\label{eqn:hamlb}
\end{align}
%
This lower bound $\div_{lb}(\yb | \S)$ overcounts the intersection in Eqn.~\eqref{eq:intersection}
by summing 
the intersections with each $\calB_k(\yb')$ separately.
We can also interpret this lower bound as clipping the 
series arising from the inclusion-exclusion principle 
to the 
first-order terms. 
Importantly, \eqnref{eqn:hamlb} 
depends on $\yb$ only via its Hamming distance to $\yb'$. 
This is a 
\emph{cardinality potential} that depends only on the \emph{number} 
of variables $y_i$ assigned to a particular label. 
Specifically, ignoring constant terms, 
the lower bound can be written as a summation of cardinality factors (one for each previous 
solution $\yb' \in \S$): $\div_{lb}(\yb | \S) = \sum_{\yb' \in \S} \theta_{\yb'}(\yb)$, 
where 
$\theta_{\yb'}(\yb) = \frac{b}{|S|} - I_{\yb'}(\yb)$, $b$ is a constant (size of a k-radius Hamming ball), 
and $I_{\yb'}(\yb)$ is the number of points in the intersection 
of k-radius Hamming balls centered at $\yb'$ and $\yb$. 

With this approximation, the greedy step means performing MAP inference in a factor graph augmented with 
cardinality potentials: $\argmax_{\yb} \rel(\yb) + \lambda \div_{lb}(\yb | \S)$. This may be solved via 
message-passing, 
and 
all outgoing messages from cardinality factors can be computed in $O(n \log n)$ time \cite{tarlow_aistats10}. 
While this algorithm does not offer any approximation guarantees, it performs well in practice. 
A subtle point to note is that
$\div_{lb}(\yb|S)$ is always decreasing \wrt $|S|$ but may become negative
 due to over-counting. 
We can fix this by clamping $\div_{lb}(\yb|S)$ to be greater than $0$, but in our experiments this was unnecessary 
-- the greedy algorithm never chose a set where $\div_{lb}(\yb | S)$ was negative. 

\textbf{Comparison to DivMBest.} 
The greedy algorithm for Hamming diversity is similar in spirit to the recent work of 
\citet{batra_eccv12}, who also proposed a greedy algorithm (DivMBest) for finding diverse MAP solutions 
in graphical models.  
They did not provide any justification for greedy, and our formulation sheds some light on their work. 
Similar to our approach, at each greedy step, DivMBest involves 
maximizing a diversity-augmented score: $\argmax_{\yb} \rel(\yb) + \lambda \sum_{\yb' \in \S} \theta_{\yb'}(\yb)$. 
However, their diversity function grows \emph{linearly} with the Hamming distance, 
$\theta_{\yb'}(\yb) = \ham(\yb', \yb) = \sum_{i = 1}^{n} \ind{y_i' \ne y_i}$. 
Linear diversity rewards are not robust, and tend to over-reward 
diversity.
Our formulation uses a robust diversity function 
$\theta_{\yb'}(\yb) = \frac{b}{|S|} - I_{\yb'}(\yb)$ 
that saturates as $\yb$ moves far away from $\yb'$. 

In our experiments, we make the saturation behavior smoothly \emph{tunable} via a parameter $\gamma$:  
$I_{\yb'}(\yb) = e^{-\gamma \ham(\yb',\yb)}$. 
A larger $\gamma$ corresponds to Hamming balls of smaller radius, and 
can be set to optimize performance on validation data. We found this to work better than directly tuning 
the radius $k$.

        

\vspace{\sectionReduceTop}
\section{Experiments}
\label{sec:exp}
\vspace{\sectionReduceBot}

We apply our greedy maximization algorithms 
to two image segmentation problems: (1) interactive binary segmentation (object cutout) (\secref{sec:exp_interactive});
(2) category-level object segmentation on the PASCAL VOC 2012 dataset~\cite{pascal-voc-2012} (\secref{sec:exp_category}).
We compare all methods by their respective oracle accuracies, i.e. the accuracy of the
most accurate segmentation in the set of $M$ diverse segmentations returned by that method. For a
small value of $M \approx 5$ to $10$, a high oracle accuracy indicates that the algorithm has achieved high
recall and has identified a good pool of candidate solutions for further processing in a cascaded pipeline.
In both experiments, the label ``background'' is typically expected to appear somewhere in the image, 
and thus does not play a role in the label cost/transition diversity functions. Furthermore, 
in binary segmentation there is only one non-background label. Thus, 
we report results with Hamming diversity only (label cost and label transition diversities are not applicable). 
For the multi-class segmentation experiments, we report experiments with all three. 


\textbf{Baselines.} 
We compare our proposed methods against DivMBest~\cite{batra_eccv12}, 
which greedily produces diverse segmentation by explicitly adding a linear Hamming distance term to the factor 
graph. Each Hamming term is decomposable along the variables $y_i$ and simply modifies the 
node potentials $\tilde{\theta}(y_i) = \theta(y_i) + \lambda \sum_{\yb' \in \S} \ind{y_i \ne y'_i}$. 
DivMBest has been shown to outperform
techniques such as M-Best-MAP~\cite{yanover_nips03,batra_uai12}, which produce 
high scoring solutions without a focus on diversity, and 
sampling-based techniques, which produce diverse solutions without a focus on the relevance term~\cite{batra_eccv12}. Hence, we do not include those methods here. 
We also report results for combining different diversity functions via two 
operators: ($\otimes$), where we generate the top $\frac{M}{k}$ solutions for each of $k$ diversity functions and 
then concatenate these lists; 
and ($\oplus$), where we linearly combine diversity functions 
(with coefficients chosen by $k$-D grid search) and generate $M$ solutions using the combined diversity. 

\begin{table}[t]
\centering
\tiny
\begin{tabular}{@{}lcllllllllllll@{}}
\toprule
         & \multicolumn{3}{c}{Label Cost (LC)}            &               & \multicolumn{3}{c}{Hamming Ball (HB)} &            & \multicolumn{3}{c}{Label Transition (LT)} \\ 
\cmidrule{2-4} \cmidrule{6-8} \cmidrule{10-12}
         & \multicolumn{1}{l}{MAP}   & M=5   & M=15  &               & MAP         & M=5        & M=15       &            & MAP        & M=5        & M=15       \\
\cmidrule{2-4} \cmidrule{6-8} \cmidrule{10-12}
min$\{1,\cdot\}$ & 42.35                  & 45.43 & 45.58 &   DivMBest    & 43.43       & 51.21      & 52.90        & min$\{1,\cdot\}$ & 42.35      & 44.26      & 44.78      \\
$\sqrt(\cdot)$  & 42.35                     & 45.72 & 50.01 &   HB & 43.43   & \textbf{51.71}     & \textbf{55.32}        & $\sqrt(\cdot)$  & 42.35      & 45.43      & 46.21      \\
log$(1+\cdot)$ & \multicolumn{1}{l}{42.35} & \textbf{46.28} & \textbf{50.39} &              &             &            &              & log$(1+\cdot)$ & 42.35      & \textbf{45.92}      & \textbf{46.89}      \\ \\[-5pt]

& &  & \multicolumn{3}{c}{ \hspace{2mm}$\otimes$ Combined Diversity} & &  \multicolumn{3}{c}{\hspace{2mm}$\oplus$ Combined Diversity} &   & &  \\
\cmidrule{5-6} \cmidrule{9-9} 
&  & & & M=15  & M=16 & & & M=15 & & \\
\cmidrule{5-6} \cmidrule{9-9} 
& \multicolumn{3}{l}{HB $\otimes$ LC $\otimes$ LT} & \textbf{56.97}  & - & \multicolumn{2}{l}{DivMBest $\oplus$ HB} & 55.89 &  &  &       \\
& \multicolumn{3}{l}{ DivMBest $\otimes$ HB $\otimes$ LC $\otimes$ LT} & -   & \textbf{57.39} & \multicolumn{2}{l}{DivMBest $\oplus$ LC $\oplus$ LT} & 53.47 & &  &       \\
\bottomrule
\end{tabular}
\caption{\small PASCAL VOC 2012 \val \oracle accuracies for different diversity functions.}
\label{table:pascalresults}
\vspace{-20pt}
\end{table}

\vspace{\subsectionReduceTop}
\subsection{Interactive segmentation}
\label{sec:exp_interactive}
\vspace{\subsectionReduceBot}

In interactive foreground-background segmentation, the user provides partial labels via scribbles.
One way to minimize interactions is for the system to provide a set of candidate segmentations for the user to choose from.
We replicate the experimental setup of \citep{batra_eccv12}, who curated 
100 images from the PASCAL VOC 2012 dataset, 
and manually provided scribbles on objects contained in them.
For each image, the relevance model $\rel(\yb)$ is a 
2-label pairwise CRF, with a node term for each superpixel in the image and an
edge term for each adjacent pair of superpixels. At each superpixel, we extract
colour and texture features. We train a Transductive SVM from the partial supervision 
provided by the user scribbles. 
The node potentials are derived from the scores of these TSVMs. The edge potentials are
contrast-sensitive Potts. Fifty of the images were 
used for tuning the diversity 
parameters $\lambda, \gamma$, and the other 50 for reporting \oracle accuracies.
The 2-label contrast-sensitive Potts model results in a supermodular relevance function $\rel(\yb)$, 
which can be efficiently maximized via graph cuts \cite{Kolmogorov2004}. The Hamming ball 
diversity $\div_{lb}(\yb | S)$ is a collection of cardinality factors, which we optimize with the Cyborg implementation \cite{tarlow_aistats10}.  

\textbf{Results.} For each of the 50 test images in our dataset we generated the single best $\yb^1$ and
5 additional solutions $\{\yb^2,\ldots, \yb^6\}$ using each method. 
Table \ref{table:interactiveseg} shows the average \oracle accuracies for DivMBest, Hamming ball diversity, 
and their two combinations. We can see that the combinations slightly outperform both approaches. 
\begin{table}[h]
\centering
\scriptsize
\begin{tabular*}{1\columnwidth}{@{\extracolsep{\fill}}lllll@{}}
\toprule
                 & MAP   & M=2   & M=6   &  \\ \midrule
DivMBest         & 91.57 & 93.16 & 95.02 &  \\
Hamming Ball & 91.57 & 93.95 & 94.86 &  \\
DivMBest$\otimes$Hamming Ball & - & - & 95.16 & \\
DivMBest$\oplus$Hamming Ball & - & - & 95.14 & \\ \bottomrule
\end{tabular*}
\caption{Interactive segmentation: \oracle pixel accuracies averaged over 50 test images}
\vspace{-5pt}
\label{table:interactiveseg}
\end{table}

\vspace{\subsectionReduceTop}
\subsection{Category level Segmentation}
\label{sec:exp_category}
\vspace{\subsectionReduceBot}

In category-level object segmentation, 
we label each pixel with one of 20 object categories
or background. 
%
We construct a multi-label pairwise CRF on superpixels. Our node potentials are outputs 
of category-specific regressors trained by \cite{carreira_eccv12}, and our edge potentials are 
multi-label Potts. Inference in the presence of diversity terms is performed with the implementations 
of \citet{delong_cvpr10} for label costs, \citet{tarlow_aistats10} for Hamming ball diversity, 
and \citet{Boykov2001} for label transitions. 

\begin{figure}[htbp]
\centering
\begin{minipage}[c]{.63\linewidth}
\includegraphics[scale=0.36]{./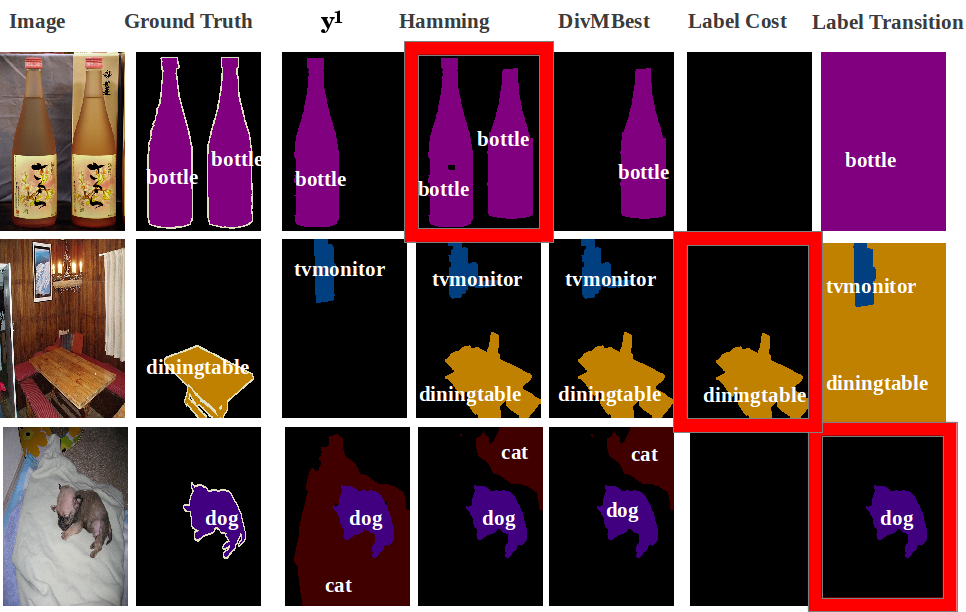}
\end{minipage}\qquad
\begin{minipage}[c]{.3\linewidth}
\caption{Qualitative Results: each row shows the original image, ground-truth segmentation (GT) from PASCAL, 
the single-best segmentation $\yb^1$, and \oracle segmentation from the $M=15$ segmentations (excluding $\yb^1$) 
for different definitions of diversity. Hamming typically performs the best. In certain situations (row3), 
label transitions help since the single-best segmentation $\yb^1$ included a rare pair of labels (dog-cat boundary). }
\label{fig:qualitative}
\end{minipage}
\end{figure}

\textbf{Results}. We evaluate all methods on the PASCAL VOC 2012 data~\cite{pascal-voc-2012}, consisting of
\train, \val and \test partitions with about 1450 images each. 
We train the regressors of  \cite{carreira_eccv12} on \train, and report \oracle accuracies of different methods on \val 
(we cannot report oracle results on \test since those annotations are not publicly available). 
Diversity parameters ($\gamma$, $\lambda$) are chosen by performing cross-val on \val.
The standard PASCAL accuracy is the corpus-level intersection-over-union 
measure, averaged over all categories. 
For both label cost and transition, we try 3 different concave 
functions $h(\cdot) = \min\{1,\cdot\}$, $\sqrt(\cdot)$ and $\log(1+\cdot)$.
Table \ref{table:pascalresults} shows the results.\footnote{MAP accuracies in Table~\ref{table:pascalresults} 
are different because of two different approximate MAP solvers: LabelCost/Transition use 
alpha-expansion and HammingBall/DivMBest use message-passing.} 
Hamming ball diversity performs the best, followed by DivMBest, and label cost/transitions are worse here. 
We found that while worst on average, label transition diversity helps in an interesting scenario -- 
when the first best segmentation $\yb^1$ includes a pair of rare or mutually confusing labels (say dog-cat). 
\figref{fig:qualitative} shows an example, and more illustrations are provided in the supplement. 
In these cases, searching for a different label transition produces a better segmentation. 
Finally, we note that lists produced with combined diversity significantly outperform 
any single method (including DivMBest). 



\vspace{\sectionReduceTop}
\section{Discussion and Conclusion}
\label{sec:discussions}
\vspace{\sectionReduceBot}

In this paper, we study greedy algorithms for maximizing scoring functions that promote diverse sets of combinatorial configurations. 
This problem arises naturally in domains such as 
Computer Vision, Natural Language Processing, or
Computational Biology, 
where we want to search for a set of \emph{diverse high-quality} solutions in a 
structured output space. 

The diversity functions we propose are monotone submodular functions by construction. Thus, 
if $\rel(\yb) + p(\yb) \geq 0$ for all $\yb$, then the entire scoring function $F$ is monotone submodular.
We showed that $\rel(\yb)$ can simply be learned to be positive. 
The greedy algorithm for maximizing monotone submodular functions has proved useful in moderately-sized \emph{unstructured} 
spaces. To the best of our knowledge, this is the first generalization to exponentially large structured output spaces. 
In particular, our contribution lies in reducing the greedy augmentation step to inference with structured, efficiently solvable 
HOPs. This insight makes new connections between submodular optimization and work on inference in graphical models.
We now address some questions. 

\textbf{Can we sample?} 
One question that may be posed is how random sampling would perform for large ground sets $V$. 
Unfortunately, the expected value of 
a random sample of $M$ elements can be much worse than the optimal value $F(\S^*)$, especially if $N$ is large. 
Lemma~\ref{lem:random} is proved in the supplement.
%
\begin{lemma}\label{lem:random}
  Let $\S \subseteq V$ be a sample of size $M$ taken uniformly at random. There exist monotone submodular functions where $\mathbb{E}[F(\S)] \leq \frac{M}{N} \max_{|\S| = M}F(\S)$.
\end{lemma}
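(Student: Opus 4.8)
The plan is to prove the statement by exhibiting an explicit family of monotone submodular functions for which the claimed inequality holds (with equality, in fact, in the most natural sampling model). Fix a distinguished element $v^\star \in V$ and take the ``single-element coverage'' function
\[
F(\S) \;=\; \ind{v^\star \in \S} \;=\; \min\{1, |\S \cap \{v^\star\}|\}.
\]
First I would verify that $F$ is a legitimate instance for the setting of Section~\ref{sec:setup_basics}: it is normalized ($F(\emptyset) = 0$), nonnegative, monotone (adding items never decreases the value), and submodular, since its marginal gain $F(a \mid \S)$ equals $1$ when $a = v^\star$ and $v^\star \notin \S$, and $0$ otherwise, so $F(a \mid \S) \ge F(a \mid T)$ whenever $\S \subseteq T$ and $a \notin T$.

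Next I would evaluate both sides of the asserted bound. Since $M \ge 1$, any list $\S$ with $|\S| = M$ that contains $v^\star$ achieves $F(\S) = 1$, so $\max_{|\S| = M} F(\S) = 1$. For the expectation, $\mathbb{E}[F(\S)] = \Pr[v^\star \in \S]$. If $\S$ is a uniformly random \emph{subset} of size $M$, then $\Pr[v^\star \in \S] = \binom{N-1}{M-1}\big/\binom{N}{M} = M/N$, so $\mathbb{E}[F(\S)] = \tfrac{M}{N} = \tfrac{M}{N}\max_{|\S| = M} F(\S)$ exactly. If instead $\S$ is a uniformly random \emph{list} of length $M$ (sampling with replacement, matching the list convention of the paper), then $\Pr[v^\star \notin \S] = (1 - 1/N)^M$, and Bernoulli's inequality $(1 - 1/N)^M \ge 1 - M/N$ gives $\mathbb{E}[F(\S)] = 1 - (1 - 1/N)^M \le \tfrac{M}{N} = \tfrac{M}{N}\max_{|\S| = M} F(\S)$. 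Either way the bound follows.

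There is essentially no hard step: the only thing to be careful about is matching the exact sampling model of the supplement (size-$M$ subset versus length-$M$ list with repetition), which I would handle by carrying out the computation in whichever convention is used there; the inequality holds in both. I would close by remarking that letting $N \gg M$ makes $M/N$ arbitrarily small, so for this $F$ we have $F(\S^*) = 1$ while $\mathbb{E}[F(\S)] \to 0$, which also substantiates the stronger informal claim in the surrounding text that a random sample can be much worse than the optimum.
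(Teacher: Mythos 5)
Your proof is correct and takes essentially the same approach as the paper: both exhibit an explicit monotone (in fact modular) function concentrated on a small fixed subset of $V$, so that a uniform size-$M$ sample captures only a $\tfrac{M}{N}$ fraction of the optimum in expectation. The paper's construction uses $F(S) = |S \cap R|$ with $|R| = M$ (plus an $\epsilon$-perturbation to prove a slightly stronger supplementary variant), whereas you take $R$ to be a singleton; for the lemma as stated both reduce to the same hypergeometric computation and yield equality, so your argument is a valid, marginally simpler instance of the same idea.
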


\textbf{Guarantees?} 
If $F$ is nonnegative, monotone submodular, 
then using an exact HOP inference algorithm will clearly result in 
an approximation factor of $1-1/e$.
%
But many HOP inference procedures are approximate. Lemma~\ref{lem:approxfactor} 
formalizes how approximate inference affects the approximation bounds.
%
%
\begin{lemma} 
\label{lem:approxfactor}
Let $F \geq 0$ be monotone submodular.
If each step of the greedy algorithm uses an approximate marginal gain maximizer $b^{t+1}$ with 
$F(b^{t+1}\mid \S^t) \geq \alpha \max_{a \in V}F(a \mid \S^{t}) - \epsilon_{t+1}$, 
then $F(\S^M) \geq (1-\frac{1}{e^{\alpha}})\max_{|\S| \le M}F(\S) - \sum_{i=1}^M\epsilon_t$.
\vspace{-5pt}
\end{lemma}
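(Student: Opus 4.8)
The plan is to adapt the classical Nemhauser--Wolsey--Fisher analysis of the greedy algorithm, carrying the two sources of error — the multiplicative factor $\alpha$ and the per-step additive slack $\epsilon_{t+1}$ — through the standard telescoping recursion. Fix an optimal feasible set $\S^\ast \in \argmax_{|\S|\le M} F(\S)$ and write $\mathrm{OPT} = F(\S^\ast)$.

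First I would establish the usual ``greedy closes a $1/M$ fraction of the remaining gap'' inequality: for any current set $\S^t$,
\[
\max_{a\in V} F(a\mid \S^t) \;\ge\; \frac{1}{|\S^\ast|}\sum_{a\in\S^\ast} F(a\mid \S^t) \;\ge\; \frac{1}{M}\big(F(\S^\ast\cup\S^t) - F(\S^t)\big) \;\ge\; \frac{1}{M}\big(\mathrm{OPT} - F(\S^t)\big),
\]
where the first step is because a maximum exceeds an average, the second uses submodularity (the telescoped sum of marginal gains dominates the single gain of adding all of $\S^\ast$ at once) together with $|\S^\ast|\le M$ and nonnegativity of the marginal gains to pad the sum up to $M$ terms, and the third uses monotonicity.

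Next I would combine this with the approximate-oracle hypothesis. Since $F(\S^{t+1}) - F(\S^t) = F(b^{t+1}\mid\S^t) \ge \alpha\max_{a} F(a\mid\S^t) - \epsilon_{t+1}$, setting $\delta_t := \mathrm{OPT} - F(\S^t)$ gives the recursion
\[
\delta_{t+1} \;\le\; \Big(1 - \tfrac{\alpha}{M}\Big)\delta_t + \epsilon_{t+1}.
\]
Unrolling from $\delta_0 = \mathrm{OPT}$ (using $F(\emptyset)=0$) yields $\delta_M \le (1-\alpha/M)^M\,\mathrm{OPT} + \sum_{t=1}^M (1-\alpha/M)^{M-t}\epsilon_t$. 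Bounding $(1-\alpha/M)^{M-t}\le 1$ in each additive term and $(1-\alpha/M)^M \le e^{-\alpha}$ in the leading term gives $\delta_M \le e^{-\alpha}\,\mathrm{OPT} + \sum_{t=1}^M\epsilon_t$, which rearranges to the claimed $F(\S^M) \ge (1-e^{-\alpha})\,\mathrm{OPT} - \sum_{t=1}^M\epsilon_t$.

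The steps are elementary, so I do not anticipate a real obstacle; the only points needing a little care are (i) justifying the middle inequality in the first display when $\S^\ast$ and $\S^t$ overlap or when $|\S^\ast| < M$, which is exactly where nonnegativity of the marginal gains (a consequence of monotonicity) enters, and (ii) preventing the additive errors from compounding across iterations — here the bound $(1-\alpha/M)^{M-t}\le 1$, valid since the hypothesis implicitly has $0<\alpha\le 1$ and hence $1-\alpha/M\in[0,1)$, is precisely what keeps each $\epsilon_t$ from being amplified. The main task is simply to present the recursion and its unrolling cleanly.
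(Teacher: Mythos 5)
Your proposal is correct and follows essentially the same route as the paper's own proof: the ``max exceeds average / telescoping over $\S^\ast\setminus\S^t$'' step is exactly the paper's helper lemma yielding $F(b^{t+1}\mid\S^t)\ge\frac{\alpha}{M}\bigl(F(\S^\ast)-F(\S^t)\bigr)-\epsilon_{t+1}$, and the unrolled recursion with the bounds $(1-\alpha/M)^{M-t}\le 1$ and $(1-\alpha/M)^M\le e^{-\alpha}$ matches the paper's concluding computation. No gaps.
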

Parts of Lemma~\ref{lem:approxfactor} have been observed in previous work \cite{goundan09,streeter_nips08}; 
we show the combination in the supplement. 
If $F$ is monotone but not nonnegative, then Lemma~\ref{lem:approxfactor} can be extended to a relative error bound $\frac{F(\S^M) - F_{\min}}{F(\S^*) - F_{\min}} \geq (1 - \frac{1}{e^\alpha}) - \frac{\sum_i \epsilon_i}{F(\S^*) - F_{\min}}$ that refers to $F_{\min} = \min_S F(S)$ and the optimal solution $S^*$.
While stating these results, we add that further 
additive approximation losses occur if the approximation bound for inference is computed on a 
shifted or reflected function (positive scores vs positive energies). 
We pose theoretical improvements as an open question for future work.
That said, our 
experiments convincingly show that the algorithms perform very well in practice, even when there 
are no guarantees (as with Hamming Ball diversity).

\textbf{Generalization.} In addition to the three specific examples in Section~\ref{sec:div}, our constructions generalize to the broad HOP class of 
upper-envelope potentials~\cite{kohli_cvpr10}. The details are provided in the supplement.

{\small
\textbf{Acknowledgements.} 
We thank Xiao Lin for his help. The majority of this work was done while AP was an intern at Virginia Tech. 
AP and DB were partially supported by 
the National Science Foundation under Grant No. IIS-1353694 and IIS-1350553,  
the Army Research Office YIP Award W911NF-14-1-0180, 
and the Office of Naval Research Award N00014-14-1-0679, awarded to DB. 
SJ was supported by gifts from Amazon Web Services, Google, SAP, 
The Thomas and Stacey Siebel Foundation, Apple, C3Energy, Cisco, Cloudera, 
EMC, Ericsson, Facebook, GameOnTalis, Guavus, HP, Huawei, Intel, Microsoft, NetApp, 
Pivotal, Splunk, Virdata, VMware, WANdisco, and Yahoo!.
}

\vspace{-12pt}

\begin{small}
\bibliographystyle{abbrvnat}
\bibliography{local}
\end{small}

\begin{appendix}
\section{Structured SVMs with nonnegativity constraint}
In this section, we will show that SSVMs have no natural origin, and that parameters learnt with non-negativity 
constraints achieve exactly the same hinge loss as achieved without the nonnegativity constraint. \\
For a set of $\ell$ training instances ($\boldsymbol{x}^n,\yb^n$) $\in \mathcal{X}\times\mathcal{Y}, n=1,\dots,\ell$ from a sample space
$\mathcal{X}$ and label space $\mathcal{Y}$, 
the structured SVM minimizes the following regularized risk function.
\begin{equation}
\underset{\boldsymbol{w}}{\min} \quad \|\boldsymbol{w}\|^2 + C \sum_{n=1}^{\ell}
   \underset{\yb \in\mathcal{Y}}{\max} \left(\Delta(\yb^n,\yb) + \boldsymbol{w}'\Psi(\boldsymbol{x}^n,\yb) - \boldsymbol{w}'\Psi(\boldsymbol{x}^n,\yb^n)\right)
\end{equation}
The function $\Delta: \mathcal{Y} \times \mathcal{Y} \to \mathbb{R}_+$ measures a distance in label space and is 
an arbitrary function satisfying $\Delta(\yb,\yb') \geq 0$ and  $\Delta(\yb,\yb)=0 \;\; \forall \yb,\yb' \in \mathcal{Y}$. 
The function $\Psi: \mathcal{X} \times \mathcal{Y} \to \mathbb{R}^d$ is a feature function, extracting some feature vector from a 
given sample and label. 

Because the regularized risk function above is non-differentiable, it is often 
reformulated in terms of a quadratic program by introducing one slack variable $\xi_n$ for each sample, each 
representing the value of the maximum. The standard structured SVM primal formulation is given as follows.

\begin{equation}
 \begin{array}{cl}
  \underset{\boldsymbol{w},\boldsymbol{\xi}}{\min} & \|\boldsymbol{w}\|^2 + C \sum_{n=1}^{\ell} \xi_n\\
  \textrm{s.t.} & \boldsymbol{w}' \Psi(\boldsymbol{x}^n,\yb^n) - \boldsymbol{w}' \Psi(\boldsymbol{x}^n,\yb) + \xi_n \geq \Delta(\yb^n,\yb),\qquad n=1,\dots,\ell,\quad \forall \yb \in \mathcal{Y}
  \end{array}
  \label{eq:biasedsvm}
\end{equation}

We will refer to the primal objective in Equation~\ref{eq:biasedsvm} as $\boldsymbol{P}_{\textrm{1}}$ and the optimal 
solution as $\boldsymbol{w}_{\textrm{1}},\boldsymbol{\xi_\textrm{1}}$. \\

Next, consider the following augmented formulation, where $\hat{\boldsymbol{w}} = \begin{bmatrix} w \\ b \end{bmatrix}$, $\hat{\Psi}(\boldsymbol{x}^n,\yb^n) = \begin{bmatrix} \Psi(\boldsymbol{x}^n,\yb^n) \\ 1 \end{bmatrix}$:
\begin{equation}
 \begin{array}{cl}
  \underset{\boldsymbol{\hat{w}},\boldsymbol{\xi}}{\min} & \|\boldsymbol{w}\|^2 + C \sum_{n=1}^{\ell} \xi_n\\
  \textrm{s.t.} & \hat{\boldsymbol{w}}' \hat{\Psi}(\boldsymbol{x}^n,\yb^n) - \hat{\boldsymbol{w}}' \hat{\Psi}(\boldsymbol{x}^n,\yb) + \xi_n \geq \Delta(\yb^n,\yb),\qquad n=1,\dots,\ell,\quad \forall \yb \in \mathcal{Y}
  \end{array}
  \label{eq:unbiasedsvm}
\end{equation}
We will refer to the primal objective in Equation~\ref{eq:unbiasedsvm} as $\boldsymbol{P}_{\textrm{2}}$ and the optimal 
solution as $\hat{\boldsymbol{w}_{\textrm{2}}} = \begin{bmatrix} w_2 \\ b_2 \end{bmatrix},\boldsymbol{\xi_\textrm{2}}$.
\textit{Note: We do not regularize b.}
\paragraph{Claim 1.} $\boldsymbol{P}_{\textrm{1}}(\boldsymbol{w}_{\textrm{1}},\boldsymbol{\xi_\textrm{1}}) = \boldsymbol{P}_{\textrm{2}}(\hat{\boldsymbol{w}_{\textrm{2}}},\boldsymbol{\xi_\textrm{2}}) $ \\
\textbf{Proof.} Adding a bias feature (b) doesn't affect the objective function and every constraint is invariant of it.
Hence, the two problems are equivalent.

Next, consider the formulation, where we extend $\boldsymbol{P}_{\textrm{2}}$ to enforce non-negativity of scores.
\begin{equation}
 \begin{array}{cl}
  \underset{\boldsymbol{\hat{w}},\boldsymbol{\xi}}{\min} & \|\boldsymbol{w}\|^2 + C \sum_{n=1}^{\ell} \xi_n\\
  \textrm{s.t.} & \hat{\boldsymbol{w}}' \hat{\Psi}(\boldsymbol{x}^n,\yb^n) - \hat{\boldsymbol{w}}' \hat{\Psi}(\boldsymbol{x}^n,\yb) + \xi_n \geq \Delta(\yb^n,\yb),\qquad n=1,\dots,\ell,\quad \forall \yb \in \mathcal{Y} \\
   & \hat{\boldsymbol{w}}'\hat{\Psi}(\boldsymbol{x}^n,\yb) \ge 0 \qquad n=1,\dots,\ell,\quad \forall \yb \in \mathcal{Y}
  \end{array}
  \label{eq:non-negsvm}
\end{equation}
We will refer to the primal objective in Equation~\ref{eq:non-negsvm} as $\boldsymbol{P}_{\textrm{3}}$ and the optimal 
solution as $\hat{\boldsymbol{w}_{\textrm{3}}} = \begin{bmatrix} w_3 \\ b_3 \end{bmatrix},\boldsymbol{\xi_\textrm{3}}$.

\paragraph{Claim 2.} $\boldsymbol{P}_{\textrm{2}}(\hat{\boldsymbol{w}_{\textrm{2}}},\boldsymbol{\xi_\textrm{2}}) = \boldsymbol{P}_{\textrm{3}}(\hat{\boldsymbol{w}_{\textrm{3}}},\boldsymbol{\xi_\textrm{3}}) $ \\
\textbf{Proof.} It is easy to see that $\boldsymbol{P}_{\textrm{2}}(\hat{\boldsymbol{w}_{\textrm{2}}},\boldsymbol{\xi_\textrm{2}}) \le \boldsymbol{P}_{\textrm{3}}(\hat{\boldsymbol{w}_{\textrm{3}}},\boldsymbol{\xi_\textrm{3}}) $
as $(\hat{\boldsymbol{w}_{\textrm{2}}},\boldsymbol{\xi_\textrm{2}})$ is the optimal solution for $\boldsymbol{P}_{\textrm{2}}$ and $(\hat{\boldsymbol{w}_{\textrm{3}}},\boldsymbol{\xi_\textrm{3}})$ is also a feasible solution
for $\boldsymbol{P}_{\textrm{2}}$. \\
Consider the vector $\hat{\boldsymbol{w}}_{\textrm{2}}^* = \begin{bmatrix} w_2 \\ - \underset{n}{\min} \ \underset{\yb}{\min} \ w_2'\Psi(\boldsymbol{x}^n,\yb)\end{bmatrix}$, 
Since the bias term doesn't occur in the objective function, 
hence $\boldsymbol{P}_{\textrm{3}}(\hat{\boldsymbol{w}}_{\textrm{2}}^*,\boldsymbol{\xi_\textrm{2}}) = \boldsymbol{P}_{\textrm{2}}(\hat{\boldsymbol{w}_{\textrm{2}}},\boldsymbol{\xi_\textrm{2}})$. \\

Also, $(\hat{\boldsymbol{w}}_{\textrm{2}}^*,\boldsymbol{\xi_\textrm{2}})$ is a feasible solution to $\boldsymbol{P}_{\textrm{3}}$,
hence, $\boldsymbol{P}_{\textrm{3}}(\hat{\boldsymbol{w}_{\textrm{3}}},\boldsymbol{\xi_\textrm{3}}) \le \boldsymbol{P}_{\textrm{3}}(\hat{\boldsymbol{w}}_{\textrm{2}}^*,\boldsymbol{\xi_\textrm{2}}) = \boldsymbol{P}_{\textrm{2}}(\hat{\boldsymbol{w}_{\textrm{2}}},\boldsymbol{\xi_\textrm{2}})$.\\
Therefore, $\boldsymbol{P}_{\textrm{2}}(\hat{\boldsymbol{w}_{\textrm{2}}},\boldsymbol{\xi_\textrm{2}}) = \boldsymbol{P}_{\textrm{3}}(\hat{\boldsymbol{w}_{\textrm{3}}},\boldsymbol{\xi_\textrm{3}}) $. \\

Therefore, even after adding the nonnegativity constraints, the solutions 
achieve the same values for the \emph{regularised risk function} and hence are expected to have 
the same \emph{generalization} guarantees. In practice, the non-negativity constraints can be added in a cutting-plane procedure via a MAP call.

\section{Label Transitions}
\label{sec:labeltransition}

\textbf{Groups and Motivating Scenario.}
In this section, we generalize the label cost diversity function to reward not just the presence 
of certain labels, but the presence of certain \emph{label transitions}. For instance, if the highest 
scoring segmentation contains a ``cow'' on ``grass'', this diversity function will reward other 
segmentations for containing novel label transitions, such as ``sheep-grass'' or ``cow-ground'' or ``sheep-sky''. 
Formally, we define one group $\G_{\ell,\ell'}$ per label pair $\ell,\ell'$, and an item 
$a$ belongs to $\G_{\ell,\ell'}$ if $\yb = \phi(a)$
contains two adjacent variables $y_i,y_j$ with labels $y_i = \ell, y_j = \ell'$. 

\textbf{Structured Representation of Marginal Gains.}
For diversity of label transitions, the marginal gain $\Div(a \mid \S)$ becomes a HOP 
called \emph{cooperative cuts}~\cite{jegelka_cvpr11}.
Let $\Cut_{\yb}(\ell,\ell') = \{(y_i,y_j) \in \Es \mid y_i = \ell, y_j = \ell'\}$ be the cut set for a specific label 
transition $(\ell, \ell')$. 
Further, let $\#\Cut_\S(\ell,\ell')$ count the number of items $a \in \S$ that 
contain at least one $(\ell,\ell')$ label transition: 
$\#\Cut_\S(\ell,\ell') = |\{a \in \S \mid \Cut_{\phi(a)}(\ell,\ell') \neq \emptyset\}| = |\S \inter G_{\ell,\ell'}|$. 
The marginal gains for this diversity function are: 
%
\begin{subequations}
\begin{align}
&\hspace{-5pt} \div(\yb \mid S) = \Div(\phi^{-1}(\yb) \mid \S) \\
& \hspace{-10pt} = \sum_{\ell,\ell'} \,\, h(\#\Cut_{\S \cup \phi^{-1}(\yb)}(\ell,\ell')) - h(\#\Cut_\S(\ell,\ell'))\\
& \hspace{-10pt} =  \hspace{-10pt} \sum_{\substack{\ell,\ell' \\ \Cut_{\yb}(\ell,\ell')\neq \emptyset}} \hspace{-10pt} h(1+\#\Cut_\S(\ell,\ell')) - h(\#\Cut_\S(\ell,\ell'))
\end{align}
\end{subequations}
Similar to single label groups, the gain for a label pair $(\ell,\ell')$ decreases as
$\#\Cut_\S(\ell,\ell')$ grows. 
Thus, $d(\yb \mid \S)$ rewards the presence of pair $(\ell,\ell')$ by an 
amount proportional to how rare it is in the segmentations in $\S$.
Analogously to the label costs, the parsimony factor in this setting is $p(\yb) = \sum_{\Cut_{\yb}(\ell,\ell')\neq \emptyset} c(\ell,\ell')$  encouraging each individual $\yb$ 
to have a small number of label transitions. 
Specifically, when using a count coverage and parsimony term with $c(\ell,\ell')=-1$,
we eventually maximize 
\begin{subequations}
\begin{align}
  &\rel(\yb) + p(\yb) + \div(\yb) \\
  &= \rel(\yb) - \sum_{\ell,\ell': \S \in G_{\ell,\ell'}} \min\{  \#\Cut_{\yb}(\ell,\ell'), \, 1\},
\end{align}
\end{subequations}
which is a supermodular function on the set of cut edges for each label transition. Thus, the multi-label cooperative cut inference algorithm by Kohli \etal \cite{kohli_cvpr13} applies.
%
The construction for general $h$ looks similar, with a degrading cost in front of the min.

\section{Experiments}

For the sake of completeness and to show the difference in sets of solutions generated by different diversity functions, we show sample
sets of solutions generated for a given image (Fig. \ref{fig:samplesol2}, \ref{fig:samplesol} and \ref{fig:samplesol3}). These results help in understanding the behavior of different diversity functions.

\begin{figure*}[t]
  \centering
  \includegraphics[scale=0.5]{./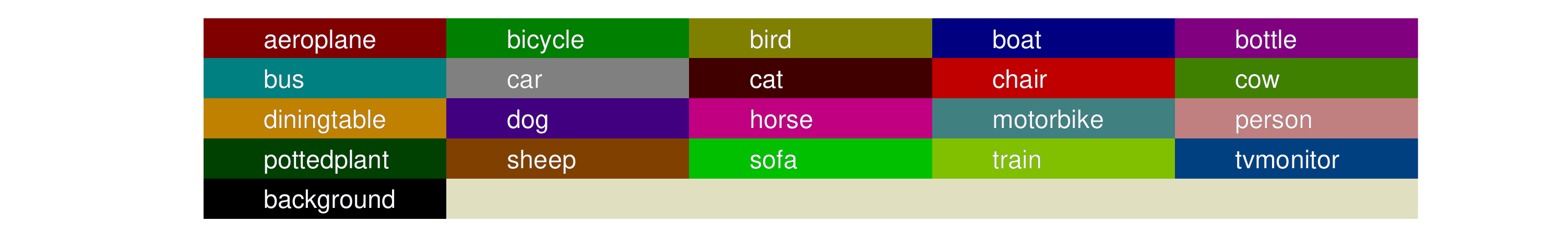}
  \caption{Color map for reading VOC segmentation results.}
  \label{fig:colormap}
\end{figure*}

\begin{figure}[htbp]
  \centering
  \includegraphics[scale=0.5]{./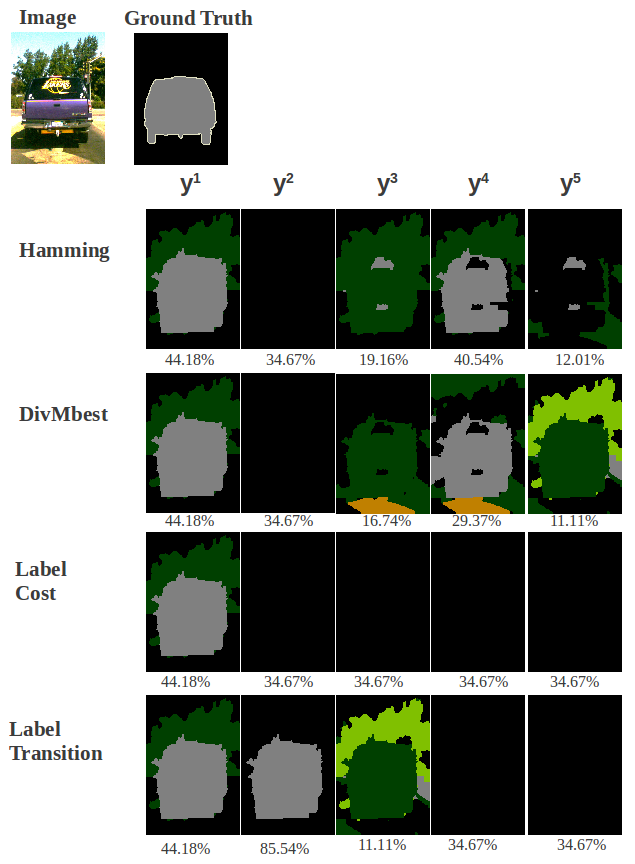}
  \caption{Sets of solutions generated with different diversity functions.}
  \label{fig:samplesol2}
\end{figure}

\begin{figure}[t]
\centering
\subfigure{\includegraphics[scale=0.44]{./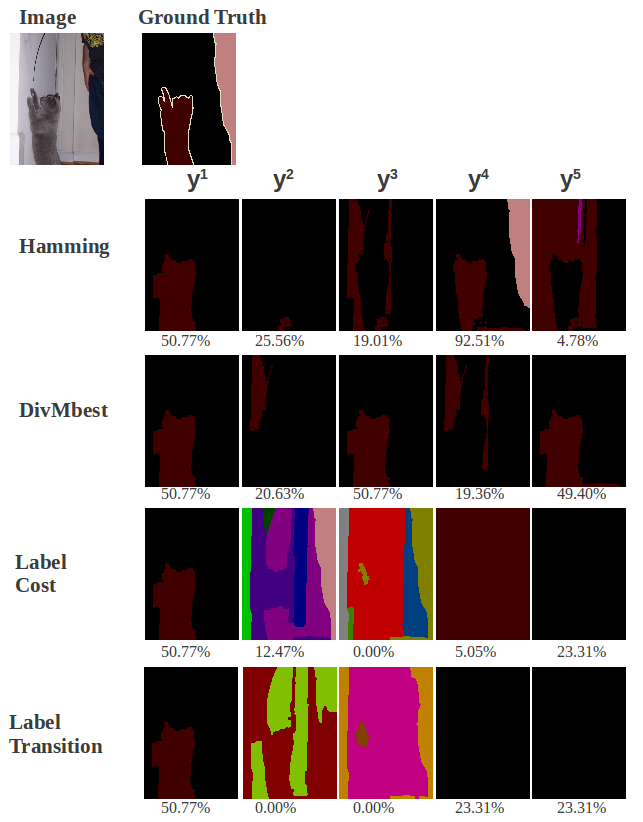}}\\
\subfigure{\includegraphics[scale=0.44]{./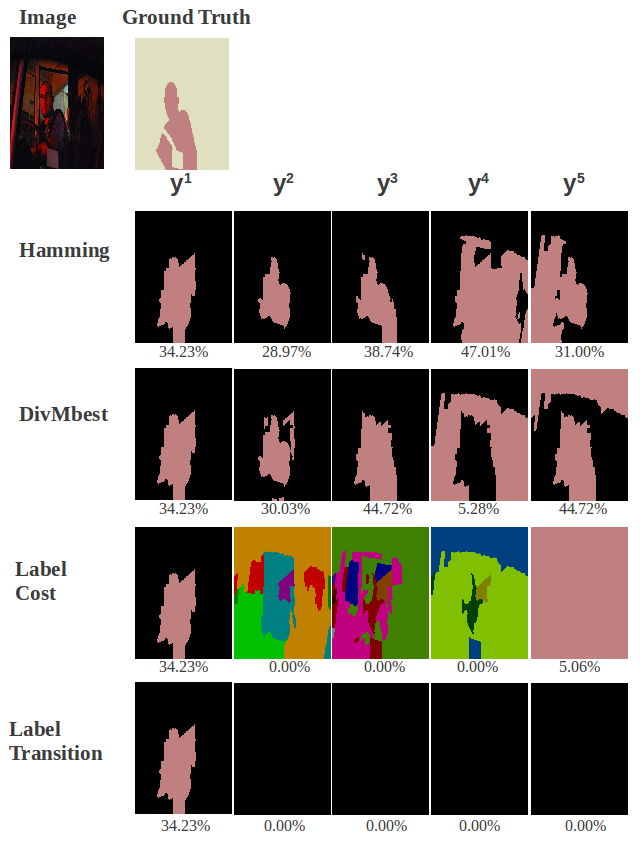}} 
\caption{Sets of solutions generated with different diversity functions.}
\label{fig:samplesol}
\end{figure}

\begin{figure}[t]
\centering
\subfigure{\includegraphics[scale=0.44]{./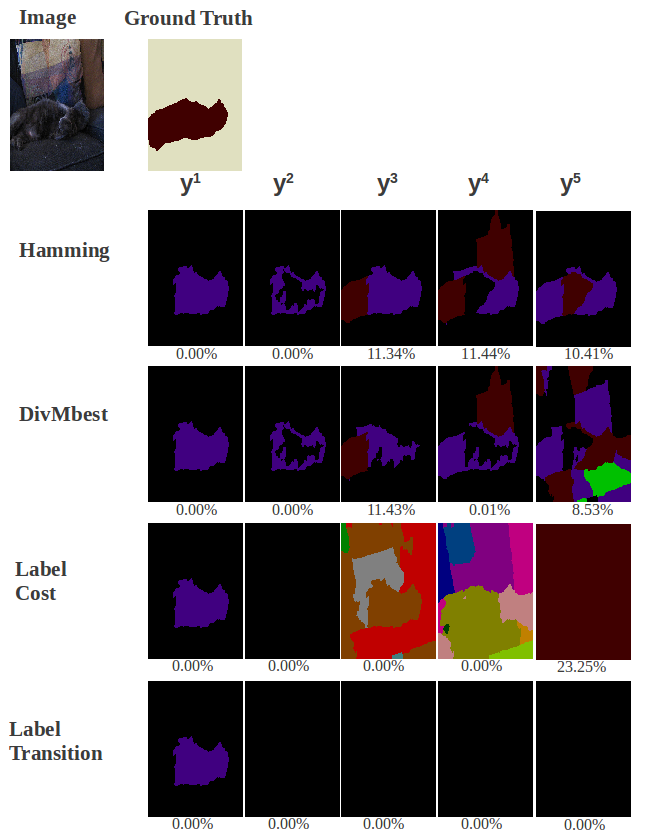}} \\
\subfigure{\includegraphics[scale=0.44]{./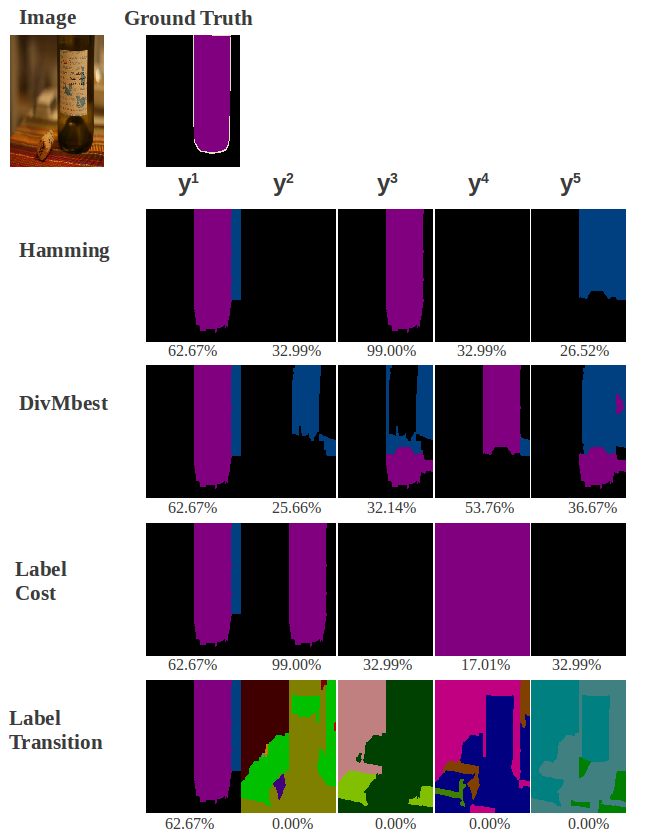}} 
\caption{Sets of solutions generated with different diversity functions.}
\label{fig:samplesol3}
\end{figure}

\newpage


\vspace{40pt}
\section{Proof of Lemma 1}

\begin{lemma}
  Let $S$ be a sample of size $M$ taken uniformly at random. There exist monotone submodular functions where $\mathbb{E}[F(S)] \leq (M/N + \epsilon/M) \max_{|S| \le M}F(S)$ for any $\epsilon \geq 0$.
\end{lemma}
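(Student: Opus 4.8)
The plan is to exhibit one explicit monotone submodular function that already witnesses the claimed gap; no concentration or averaging machinery is needed beyond a symmetry argument. First I would fix the ground set $V = \{v_1,\dots,v_N\}$, single out one distinguished element (say $v_1$), and take the ``unit-coverage'' function $F(S) = \mathbf{1}[v_1 \in S]$. Then I would verify the hypotheses of the lemma: $F(\emptyset)=0$, so $F$ is normalized; $F \ge 0$; $F$ is monotone, since enlarging a set can only preserve or create membership of $v_1$; and $F$ is submodular, because its only nonzero marginal gain is $F(v_1 \mid S) = 1 - \mathbf{1}[v_1 \in S]$, which is nonincreasing in $S$, while $F(v \mid S) = 0$ for every $v \neq v_1$.

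Next I would compute the two sides of the inequality for this $F$. The constrained optimum is $\max_{|S| \le M} F(S) = 1$, attained by any size-$\le M$ set that contains $v_1$ (taking $M \ge 1$; the case $M=0$ is trivial, both sides being $0$). For $S$ drawn uniformly among all $\binom{N}{M}$ subsets of size $M$, symmetry (or a direct count) gives $\Pr[v_1 \in S] = \binom{N-1}{M-1}\big/\binom{N}{M} = M/N$, hence $\mathbb{E}[F(S)] = M/N$. Combining the two,
\[
\mathbb{E}[F(S)] \;=\; \tfrac{M}{N}\,\max_{|S|\le M} F(S) \;\le\; \Big(\tfrac{M}{N} + \tfrac{\epsilon}{M}\Big)\max_{|S|\le M} F(S)
\]
for every $\epsilon \ge 0$, since $\epsilon/M \ge 0$; in fact the stated bound is tight up to this slack term.

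There is no genuine obstacle here: the only care points are the degenerate case $M=0$ and the (routine) check that $F$ is submodular and normalized. If one prefers a witness whose value actually grows with $|S|$ rather than the rank-one indicator, I would instead use a coverage function $F(S) = \big|\bigcup_{v \in S} G_v\big|$ in which one group $G_{v_1}$ is astronomically larger than the union of all the other groups combined; the same symmetry computation then shows $\mathbb{E}[F(S)]$ is governed by the $M/N$ probability of ever selecting $v_1$, and the residual $o(1)$ contribution from the other groups is absorbed into the $\epsilon/M$ term, yielding the same conclusion.
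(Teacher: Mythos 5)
Your proof is correct, and it takes a genuinely simpler route than the paper's. You exhibit the rank-one indicator $F(S)=\mathbf{1}[v_1\in S]$, verify monotonicity, submodularity and normalization, and compute $\mathbb{E}[F(S)]=M/N=\tfrac{M}{N}\max_{|S|\le M}F(S)$, so the claimed inequality holds for every $\epsilon\ge 0$ with the $\epsilon/M$ term as pure slack; since the lemma only asserts existence of a witness, this suffices. The paper instead takes a fixed set $R$ of size $M$ and uses
\[
F(S)=|S\cap R|+\epsilon\min\{|S\setminus R|,1\},
\]
computes $\max_{|S|\le M}F(S)=F(R)=M$, and evaluates $\mathbb{E}[F(S)]=\tfrac{M^2}{N}+\epsilon\bigl(1-\tbinom{N}{M}^{-1}\bigr)$ via the hypergeometric distribution. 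The difference in what the two constructions buy: your example is the cleanest possible demonstration that uniform sampling can capture only an $M/N$ fraction of the optimum, at the cost of being a degenerate (essentially one-element) instance; the paper's example keeps $\max F=M$ with every element of $R$ contributing, and actually \emph{uses} the $\epsilon/M$ slack, i.e., it shows the stated upper bound is nearly attained (the expectation sits just below $(\tfrac{M}{N}+\tfrac{\epsilon}{M})F(R)$), which together with the Feige-based lower bound $\mathbb{E}[F(S)]\ge \tfrac{M}{N}F(S^*)$ mentioned in the supplement pins down the $M/N$ rate as tight. Your handling of the $M=0$ edge case and your closing remark about an asymmetric coverage function are fine but inessential; the core argument stands on its own.
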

The bound in the main paper follows with $\epsilon=0$.

\begin{proof}
  To prove this statement, we consider a specific worst-case function. Let $R \subseteq V$ be a fixed set of size $M$, and let
  \begin{equation}
    \label{eq:1}
    F(S) = |S \inter R| + \epsilon\min\{|S \setminus R|, 1\}.
  \end{equation}

  The function $F$ is obviously nondecreasing and it is also submodular. 
For this function, the cardinality-constrained optimum is
\begin{equation}
  \label{eq:2}
  \max_{|S| \le M}F(S) = F(R) = M.
\end{equation}
The expected value of an $M$-sized sample is the expectation of a hypergeometric distribution plus a correction taking into account that every set except $R$ will have value at least 1 (using the second part of $F$):
\begin{align}
  \mathbb{E}_S[ F(S) ] &= {N \choose M}^{-1}\left(\sum_{S \subseteq V, |S|=M} |S \inter R| + {N \choose M}\epsilon - \epsilon\right)\\
    &= {N \choose M}^{-1}\left( \sum_{r=1}^M {M \choose r}{N-M \choose M-r}r  -\epsilon \right) + \epsilon\\
    &= \frac{M^2}{N} + \epsilon - {N \choose M}^{-1}\epsilon\\
    &< (M/N + \epsilon/M) F(R).
\end{align}
\end{proof}
This is also a fairly tight bound: 
 if we sample each element with probability $M/N$, then, using Lemma 2.2 in \cite{feige07} and the monotonicity of $F$, it holds that $\mathbb{E}_S[F(S)] \geq \frac{M}{N}F(V) \geq \frac{M}{N}F(S^*)$.

\section{Proof of Lemma 2}

\begin{lemma} 
\label{lem:approxfactor}
Let $F \geq 0$ be monotone submodular.
  If each step of the greedy algorithm uses an approximate gain maximizer $b^{i+1}$ with 
  $F(b^{i+1}\mid S^i) \geq \alpha \max_{a \in V}F(a \mid S^{i}) - \epsilon^{i+1}$, then
  \begin{equation*}
    F(S^M) \geq (1-\frac{1}{e^{\alpha}})\max_{|S| \le M}F(S) - \sum_{i=1}^M\epsilon^i.
\end{equation*}
\end{lemma}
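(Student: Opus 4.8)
The plan is to adapt the classical Nemhauser--Wolsey--Fisher analysis of the greedy algorithm, tracking the two sources of slack: the multiplicative factor $\alpha$ coming from approximate maximization, and the additive errors $\epsilon^i$. Let $S^* \in \argmax_{|S|\le M} F(S)$ and write $\mathrm{OPT} = F(S^*)$. The key inequality I would establish at each step $i$ is a recursion of the form
\begin{equation*}
\mathrm{OPT} - F(S^{i+1}) \;\le\; \Bigl(1 - \tfrac{\alpha}{M}\Bigr)\bigl(\mathrm{OPT} - F(S^{i})\bigr) + \epsilon^{i+1}.
\end{equation*}
To get this, I would first use monotonicity and submodularity in the standard way: $\mathrm{OPT} \le F(S^* \cup S^i) \le F(S^i) + \sum_{a \in S^*\setminus S^i} F(a \mid S^i) \le F(S^i) + M \max_{a\in V} F(a\mid S^i)$, so that $\max_{a} F(a\mid S^i) \ge \tfrac{1}{M}(\mathrm{OPT} - F(S^i))$. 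Then the approximate-maximization hypothesis gives $F(S^{i+1}) - F(S^i) = F(b^{i+1}\mid S^i) \ge \alpha\max_a F(a\mid S^i) - \epsilon^{i+1} \ge \tfrac{\alpha}{M}(\mathrm{OPT}-F(S^i)) - \epsilon^{i+1}$, which rearranges to the displayed recursion.

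Next I would unroll the recursion from $i=0$ (with $F(S^0)=0$, using $F(\emptyset)=0$) down to $i=M$. Unrolling gives
\begin{equation*}
\mathrm{OPT} - F(S^M) \;\le\; \Bigl(1-\tfrac{\alpha}{M}\Bigr)^M \mathrm{OPT} + \sum_{i=1}^{M} \Bigl(1-\tfrac{\alpha}{M}\Bigr)^{M-i}\epsilon^{i}.
\end{equation*}
Then I bound the two terms separately: $\bigl(1-\tfrac{\alpha}{M}\bigr)^M \le e^{-\alpha}$ by the inequality $1-x \le e^{-x}$, and each factor $\bigl(1-\tfrac{\alpha}{M}\bigr)^{M-i} \le 1$ since $0\le \alpha \le 1$ implies $0 \le 1-\alpha/M \le 1$. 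This yields $\mathrm{OPT} - F(S^M) \le e^{-\alpha}\mathrm{OPT} + \sum_{i=1}^M \epsilon^i$, i.e.\ $F(S^M) \ge (1 - e^{-\alpha})\mathrm{OPT} - \sum_{i=1}^M \epsilon^i$, which is exactly the claim.

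I do not expect a serious obstacle here — the argument is a routine combination of two well-known modifications of the greedy analysis (the $\alpha$-approximate oracle variant and the additive-error variant). The only mildly delicate points are bookkeeping ones: making sure the additive errors accumulate with coefficients $\le 1$ rather than growing (they don't, since $1-\alpha/M \le 1$), and confirming that nonnegativity of $F$ together with $F(\emptyset)=0$ is what lets us start the recursion cleanly at $F(S^0)=0$. If one wanted the relative-error extension mentioned in the text for merely monotone (possibly negative) $F$, one would replace $F$ everywhere by $F(\cdot) - F_{\min}$, which is nonnegative monotone submodular with the same marginal gains, and the same computation goes through verbatim.
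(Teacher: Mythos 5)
Your proof is correct and follows essentially the same route as the paper's: the same submodularity bound $\max_a F(a\mid S^i) \ge \tfrac{1}{M}(F(S^*)-F(S^i))$, the same recursion $F(S^*)-F(S^{i+1}) \le (1-\tfrac{\alpha}{M})(F(S^*)-F(S^i)) + \epsilon^{i+1}$, and the same unrolling with the bounds $(1-\alpha/M)^M \le e^{-\alpha}$ and $(1-\alpha/M)^{M-i}\le 1$. No gaps.
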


To prove Lemma 2, we employ a helpful intermediate observation. We will denote the optimal solution by $S^* \in \arg\max_{|S| \le M}F(S)$.

\begin{lemma}\label{lem:helper}
  If $F(b^{i+1}\mid S^i) \geq \alpha \max_{a \in V}F(a \mid S^{i}) - \epsilon^{i+1}$, then
  \begin{equation*}
    F(b^{i+1} \mid S^i) \geq \frac{\alpha}{M}(F(S^*) - F(S^i)) - \epsilon^{i+1}.
\end{equation*}
\end{lemma}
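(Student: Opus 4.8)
The plan is to exploit monotonicity and submodularity of $F$ to show that the single best available element already captures an $\tfrac{1}{M}$-fraction of the residual gap $F(S^*) - F(S^i)$, and then feed this into the approximate-maximizer guarantee. First I would write $S^* = \{o_1, o_2, \ldots, o_M\}$ (padding with repetitions if $|S^*| < M$, which is harmless since $F$ is monotone) and use the telescoping identity
\begin{equation*}
F(S^* \cup S^i) - F(S^i) = \sum_{j=1}^{M} F\bigl(o_j \mid S^i \cup \{o_1,\ldots,o_{j-1}\}\bigr).
\end{equation*}
By submodularity, each term on the right is at most $F(o_j \mid S^i) \le \max_{a \in V} F(a \mid S^i)$, so the whole sum is at most $M \cdot \max_{a \in V} F(a \mid S^i)$. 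Monotonicity gives $F(S^* \cup S^i) \ge F(S^*)$, hence the left side is at least $F(S^*) - F(S^i)$. Combining,
\begin{equation*}
\max_{a \in V} F(a \mid S^i) \;\ge\; \frac{1}{M}\bigl(F(S^*) - F(S^i)\bigr).
\end{equation*}

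Then I would simply plug this into the hypothesis $F(b^{i+1} \mid S^i) \ge \alpha \max_{a \in V} F(a \mid S^i) - \epsilon^{i+1}$, since $\alpha \ge 0$ preserves the inequality, yielding
\begin{equation*}
F(b^{i+1} \mid S^i) \;\ge\; \frac{\alpha}{M}\bigl(F(S^*) - F(S^i)\bigr) - \epsilon^{i+1},
\end{equation*}
which is exactly the claim.

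I do not anticipate a serious obstacle here; this is the standard Nemhauser--Wolsey--Fisher argument adapted to account for the approximation slack. The only point requiring a little care is the handling of the case $|S^*| < M$ — one wants a list/multiset of exactly $M$ optimal elements so that the telescoping sum has $M$ terms, each bounded by the max marginal gain; monotonicity of $F$ ensures that padding $S^*$ with arbitrary repeated elements does not decrease its value, so this is legitimate. A secondary subtlety is that the ``$\max_{a \in V}$'' must be over the same ground set $V$ from which the $o_j$ are drawn, which holds since $S^* \subseteq V$. With Lemma~\ref{lem:helper} in hand, the proof of Lemma~\ref{lem:approxfactor} itself would follow by the usual recursion: setting $\delta^i := F(S^*) - F(S^i)$, the bound gives $\delta^{i+1} \le (1 - \tfrac{\alpha}{M})\delta^i + \epsilon^{i+1}$, and unrolling over $M$ steps together with $(1-\tfrac{\alpha}{M})^M \le e^{-\alpha}$ produces $F(S^M) \ge (1 - e^{-\alpha}) F(S^*) - \sum_{i=1}^M \epsilon^i$.
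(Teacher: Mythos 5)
Your proposal is correct and follows essentially the same argument as the paper's proof: telescope the marginal gains of the optimal elements over $S^i$, bound each term by $\max_{a \in V} F(a \mid S^i)$ via submodularity, use monotonicity to relate $F(S^* \cup S^i)$ to $F(S^*)$, and plug in the approximate-maximizer hypothesis. The only cosmetic difference is that the paper telescopes over $T^i = S^* \setminus S^i$ (with $h \le M$ terms) rather than padding $S^*$ to exactly $M$ elements, but the two are interchangeable.
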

\begin{proof} \emph{(Lemma~\ref{lem:helper})}.
  Define $T^i = S^*\setminus S^i$ to be the set of all elements that are in $S^*$ but have not yet been selected.
  Order the elements in $T^i$ in an arbitrary order as $t^1, \ldots t^h$ (note that $h \leq M$ because $|S^*| \leq M$. By monotonicity of $F$ and the fact that $S^* \subseteq S^i \union T^i$, it holds that
  \begin{align}
    \label{eq:monot}
    F(S^*) - F(S^i) &\leq F(T^i \union S^i) - F(S^i)\\
    &= \sum_{j=1}^h F(t^j \mid S^i \union \{t^1, \ldots, t^{j-1})\\
    \label{eq:dimret}
    &\leq \sum_{j=1}^hF(t^j \mid S^i)\\
    &\leq M \max_{a \in V\setminus S^i} F(a\mid S^i)\\
    &\leq \frac{M}{\alpha}( F(b^{i+1} \mid S^i) + \epsilon^{i+1})
  \end{align}
  In Equation~(\ref{eq:dimret}), we use diminishing marginal returns. In the end, we use the assumption of the lemma, which implies that
  \begin{equation}
    \label{eq:3}
    \max_{a}F(a \mid S) \leq \frac{1}{\alpha}( F(b^{i+1} \mid S^i) + \epsilon^{i+1}).
  \end{equation}
Rearranging yields the result of the lemma.
\end{proof}

Now we are equipped to prove Lemma~\ref{lem:approxfactor}.
\begin{proof}
  Lemma~\ref{lem:helper} implies that
   \begin{align}
     F(b^{i+1} \mid S^i) &= F(S^{i+1}) - F(S^i)\\
     &\geq \frac{\alpha}{M}\big(F(S^*) - F(S^i)\big) - \epsilon^{i+1}.
   \end{align}
   We rearrange this to
  \begin{align}
    &F(S^*)-F(S^{i+1})\\
    &\;\leq (1-\frac{\alpha}{M})F(S^*) - (1-\frac{\alpha}{M})F(S^i) + \epsilon^{i+1}\\
    &\;= (1-\frac{\alpha}{M})(F(S^*) - F(S^i)) + \epsilon^{i+1}\\
    &\;\leq (1-\frac{\alpha}{M})^{i+1}(F(S^*)-F(S^0)) + \sum_{j=1}^i\big(1 - \frac{\alpha}{M}\big)^{i-j}\epsilon^j \\
    &\;\leq (1-\frac{\alpha}{M})^{i+1}(F(S^*)-F(S^0)) + \sum_{j=1}^i \epsilon^j 
  \end{align}
  With $F(S^0) = F(\emptyset)=0$ we rearrange to
  \begin{align}
    F(S^M) &\geq (1-(1-\frac{\alpha}{M})^M)F(S^*) - \sum_{j=1}^M\epsilon^j\\
    &\geq (1-e^{-\alpha})F(S^*) - \sum_{j=1}^M\epsilon^j.
  \end{align}
\end{proof}
\section{Relative Error}

If we have a monotone but not nonnegative function, we may shift the function and obtain a relative bound:
\begin{lemma} 
\label{lem:relbounds}
Let $F$ be an arbitrary monotone submodular function, and let $\fmin = \min_{S \subseteq V} F(S)$. 
We can define a new, shifted monotone non-negative version of $F$ as $F^+(S) \triangleq F(S) - \fmin$.
If we apply the greedy algorithm to $F^+$, we obtain a solution $\Shat$ that satisfies
$ F^+(\Shat) \geq \alpha F^+(S^*)$, then the solution $\Shat$ has a bounded relative approximation error:
  \begin{align*}
    \frac{F(\Shat) - \fmin}{F(S^*) - \fmin} \geq \alpha.
  \end{align*}
\end{lemma}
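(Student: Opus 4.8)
The plan is to observe that passing from $F$ to $F^+$ is a mere additive shift by the constant $\fmin$, so essentially nothing changes except that $F^+$ lands in the regime where the greedy analysis applies. First I would record the elementary structural facts: for any $S \subseteq T$ and $a \notin T$ the marginal gains satisfy $F^+(a \mid S) = F^+(S \cup a) - F^+(S) = F(a \mid S)$, so $F^+$ inherits submodularity and monotonicity from $F$, and moreover $F^+(S) = F(S) - \fmin \geq 0$ for every $S$ by definition of $\fmin$. Hence $F^+$ is a nonnegative monotone submodular function, which is exactly the setting in which the greedy guarantee (Lemma~\ref{lem:approxfactor}, or the exact Nemhauser bound) applies and produces the hypothesis $F^+(\Shat) \geq \alpha F^+(S^*)$.

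The key step is then to notice that the cardinality-constrained maximizers of $F^+$ and of $F$ coincide: since $F^+(S)$ and $F(S)$ differ by the same constant $\fmin$ for all $S$, we have $\argmax_{|S| \le M} F^+(S) = \argmax_{|S| \le M} F(S)$, so the set $S^*$ appearing in the hypothesis may be taken to be a maximizer of $F$ as well. Substituting the definition of $F^+$ into the greedy guarantee gives
\begin{equation*}
F(\Shat) - \fmin \;=\; F^+(\Shat) \;\geq\; \alpha\, F^+(S^*) \;=\; \alpha\bigl(F(S^*) - \fmin\bigr).
\end{equation*}
Dividing both sides by $F(S^*) - \fmin$, which is nonnegative because $S^*$ maximizes $F$ while $\fmin$ is the global minimum, yields the claimed relative bound.

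The only point requiring a word of care is the degenerate case $F(S^*) = \fmin$, i.e.\ $F$ constant on all feasible sets; there the denominator vanishes and the inequality is read as vacuous, or one simply notes that $\Shat$ is then trivially optimal with $F(\Shat) = F(S^*)$. Apart from this, I do not expect any real obstacle: the entire content of the lemma is that a constant shift commutes with "take a maximizer" and with "divide", so the multiplicative guarantee for $F^+$ transfers verbatim into a relative guarantee for $F$.
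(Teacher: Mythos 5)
Your proposal is correct and follows essentially the same route as the paper: substitute $F^+(S) = F(S) - \fmin$ into the multiplicative guarantee and divide by $F(S^*) - \fmin$. The extra observations you add (that the shift preserves monotonicity, submodularity, and the set of maximizers, and the handling of the degenerate case $F(S^*) = \fmin$) are correct and slightly more careful than the paper's one-line rearrangement, but they do not change the argument.
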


\begin{proof}
  By a proof analogous to that of Lemma 2, we get, for $\alpha = (1-\frac{1}{e})$,
 \begin{align}
  F^+(\Shat) &\geq \alpha F^+(S^*)\\
  \Leftrightarrow \quad  F(\Shat) - \fmin &\geq \alpha (F(S^*) - \fmin) \\
  \Leftrightarrow \quad  \frac{F(\Shat) - \fmin}{F(S^*) - \fmin} &\geq \alpha = 1 - \frac{1}{e}.
\end{align}


\end{proof}
\section{Generalization: Upper Envelope Potentials}

In addition to the three specific examples given in the main document (Section 4.1, 4.1, 4.2), 
we can also generalize these constructions to a broad class of HOPs called upper envelope 
potentials~\cite{kohli_cvpr10}. 

Let $G_1,G_2, \ldots, G_q,\ldots,G_b$ be disjoint groups where $b$ is polynomial in the size 
of the number of base variables ($n$). We consider the group count diversity. 
At iteration $t$ in the greedy algorithm, assume without loss of generalization that we have covered 
$G_1,G_2,\ldots,G_k$. Then for the ${(t+1)}^{th}$ iteration, the marginal gain of $\yb$ is:
\begin{equation}
 d(\yb \mid S^t) =  \begin{cases} 0 & \mbox{if } \yb \in \{G_1,..,G_k\} \\ 
 								  1 & \mbox{otherwise.} 
				\end{cases} 
\end{equation}
We can now express $d(\yb \mid S^t)$ as an upper-envelope potential, \ie 
$d(\yb \mid S^t) \equiv \max_q \nu^q(\yb)$ where:
\begin{equation}
\nu^q(\yb) = \mu^q + \sum\limits_{i \in [n]}\sum\limits_{\ell \in L} w_{i\ell}^q \delta_i(\ell)
\label{eqn:uppenv}
\end{equation}
where $\delta_i(\ell)$ returns 1 if $y_i = \ell$. 

Each linear function $\nu^q(\yb)$ encodes two pieces of information:
\begin{enumerate}

\item $\mu^q$ encodes if $G_q$ is uncovered at the end of the $t^{th}$ step, \ie 
\begin{align}
\mu^q = \begin{cases} 0 & \text{if }  q \in \{1,..,k\} \\
					  1 & \text{otherwise.}
		\end{cases}
\end{align} 

\item The second part of \eqnref{eqn:uppenv} indicates whether or not $\yb$ lies in the group $G_q$ 
(defined below). 

\end{enumerate} 

For general groups, it may not be possible to have linear encodings for membership. By construction, 
we describe one practical example:

\paragraph{Region Consistency Diversity.} 
Consider a region $C$ in the image whose superpixels $\yb_C$ we want to bias towards a homogenous or uniform 
labeling. 
Thus, when we search for diverse labelings, we want to encourage the entire set of superpixels $\yb_c$ 
to change together. 
In this case, each group $G_\ell$ corresponds to the set of segmentations that assign label $\ell$ to the region $\yb_c$. 
For such a diversity function, we define $w^q_{i\ell}$ as:
\begin{equation}
 w^q_{i\ell} = \begin{cases} 0 		& \mbox{if } i \in [C] \mbox{ and } \ell=q \\ 
 						-\infty 	& \mbox{otherwise.} 
			\end{cases}
\end{equation}
With this definition of $w^q_{i\ell}$, the second part of \eqnref{eqn:uppenv} indicates 
whether or not $\yb$ lies in the group $G_q$ 

It is known~\cite{rother_cvpr09,kohli_cvpr10} that maximizing any upper-envelope HOP can 
be reduced to the maximization of a pairwise function with the addition of an auxiliary switching 
variable $x$ that takes values from the index set of $q$ (in this case $[L]$). 
\begin{equation}
 \max\limits_{\yb_c}d(\yb \mid S^t) = \max\limits_{\yb_c,x} ( \phi_x(q) + \sum\limits_{i\in V} \phi_{xi}(q,y_i))
\end{equation}
where $\phi_x(q) = \mu^q$ and $\phi_{xi}(q,y_i) = w_{iy_i}^q$.
This pairwise function can be maximised using standard
message passing algorithms such as TRW and BP. However, for some cases, such as the region consistency 
diversity defined above, the pairwise function is supermodular, and graph cuts can be used.
\end{appendix}

\end{document}